\documentclass[letterpaper, 10 pt, conference]{ieeeconf}  

\IEEEoverridecommandlockouts                              

\usepackage{fancyhdr}

\usepackage{xcolor}
\usepackage{comment}

\usepackage[noadjust]{cite}
\usepackage{url}

\usepackage[mathcal]{euscript}

\usepackage[cmex10]{amsmath}
\usepackage{amssymb}

\usepackage{amsthm} 
  
\usepackage{dsfont} 

\usepackage{epsfig} 
\usepackage{tikz}

\usepackage{multirow}
\usepackage{enumerate}

\usepackage[ruled, vlined, linesnumbered]{algorithm2e}

\newcommand{\rewrite}[1] 	{{\color{red}#1}} 

\newtheoremstyle{plain}
	  {}
	  {}
	  {\itshape}
	  {}
	  {\bfseries}
	  {}
	  {5pt plus 1pt minus 1pt}
	  {}

\newtheoremstyle{definition}
  	  {}
	  {}
	  {\normalfont}
	  {}
	  {\bfseries}
	  {}
	  {5pt plus 1pt minus 1pt}
	  {}
  
\theoremstyle{plain}

\newtheorem{proposition}{Proposition}

\theoremstyle{definition}
\newtheorem{definition}{Definition}
\newtheorem{example}{Example}
\newtheorem{remark}{Remark}

\newcommand{\refeq}[1]			{(\ref{#1})} 
\newcommand{\reffig}[1]			{Fig. \ref{#1}} 
\newcommand{\refsec}[1]			{Section \ref{#1}}

\newcommand{\refprop}[1]		{Proposition \ref{#1}}

\newcommand{\refrem}[1]			{Remark \ref{#1}}

\newcommand{\reffn}[1] 		    {\textsuperscript{\ref{#1}}}

\newcommand{\state}{\vect{x}}
\newcommand{\ctrl}{\vect{u}}
\newcommand{\youtput}{\vect{y}}
\newcommand{\goal}{\state^{*}}
\newcommand{\goalctrl}{k_{\goal}}
\newcommand{\barrierfunc}{h}
\newcommand{\barrierrate}{\alpha}
\newcommand{\barriercorridor}{\mathrm{BC}}
\newcommand{\smin}{\mathrm{smin}}
\newcommand{\softness}{\lambda}
\newcommand{\ctrlgain}{\kappa}
\newcommand{\nstate}{n_{\vect{x}}}
\newcommand{\nctrl}{n_{\vect{u}}}
\newcommand{\noutput}{n_{\vect{y}}}
\newcommand{\nbarrier}{m}
\newcommand{\refpath}{\mathrm{p}}
\newcommand{\refpathgoal}{\mathrm{p}^*}

\newcommand{\pos}{\vect{x}}
\newcommand{\orient}{\theta}
\newcommand{\goalpos}{\vect{x}^*}
\newcommand{\linvel}{v}
\newcommand{\angvel}{\omega}
\newcommand{\lingain}{\kappa_{v}}
\newcommand{\anggain}{\kappa_{\omega}}
\newcommand{\ovect}[1]{\begin{bmatrix}\cos #1 \\ \sin #1 \end{bmatrix}}
\newcommand{\nvect}[1]{\begin{bmatrix} -\sin #1 \\ \cos #1\end{bmatrix}}
\newcommand{\ovectsmall}[1]{\scalebox{0.8}{$\begin{bmatrix}\cos #1 \\ \sin #1 \end{bmatrix}$}}
\newcommand{\nvectsmall}[1]{\scalebox{0.8}{$\begin{bmatrix} -\sin #1 \\ \cos #1\end{bmatrix}$}}

\newcommand{\R}  	{\mathbb{R}} 

\newcommand{\radius} 	{r}

\let\originalleft\left
\let\originalright\right
\renewcommand{\left}{\mathopen{}\mathclose\bgroup\originalleft}
\renewcommand{\right}{\aftergroup\egroup\originalright}
	
\newcommand{\plist}[1] 	{\left(#1\right)} 
\newcommand{\clist}[1]	{\left\{#1\right\}} 

\newcommand{\vect}[1]   {\mathrm{#1}}
\newcommand{\mat}[1]    {\mathbf{#1}}
\newcommand{\tr}[1] {{#1}^{\mathrm{T}}} 
\newcommand{\norm}[1]  {\|#1\|}

\newcommand{\argmin}{\operatornamewithlimits{arg\ min}} 
\newcommand{\argmax}{\operatornamewithlimits{arg\ max}} 
\newcommand{\diff} {\mathrm{d}}

\title{\LARGE \bf
Control Barrier Corridors: From Safety Functions to Safe Sets
\\{\vspace{-1mm}}
{\normalsize (Extended Version)} 
\\{\vspace{-3mm}}
}

\author{\"{O}m\"{u}r Arslan and Nikolay Atanasov
 \thanks{\"{O}m\"{u}r Arslan is with the Department of Mechanical Engineering, Eindhoven University of Technology, P.O. Box 513, 5600 MB Eindhoven, The Netherlands.  
 Email: o.arslan@tue.nl}%
 \thanks{Nikolay Atanasov is with the Department of Electrical and Computer Engineering, University of California San Diego, La Jolla, CA 92093, USA. 
 Email: natanasov@ucsd.edu}%
 }

\begin{document}

\maketitle
\thispagestyle{empty}
\pagestyle{empty}

\begin{abstract}
Safe autonomy is a critical requirement and a key enabler for robots to operate safely in unstructured complex environments. 
Control barrier functions and safe motion corridors are two widely used but technically distinct safety methods, functional and geometric, respectively, for safe motion planning and control.
Control barrier functions are applied to the safety filtering of control inputs to limit the decay rate of system safety, whereas safe motion corridors are geometrically constructed to define a local safe zone around the system state for use in motion optimization and reference-governor design.
This paper introduces a new notion of \emph{control barrier corridors}, which unifies these two approaches by converting control barrier functions into local safe goal regions for reference goal selection in feedback control systems.
We show, with examples on fully actuated systems, kinematic unicycles, and linear output regulation systems, that \emph{individual state safety can be extended locally over control barrier corridors} for convex barrier functions, provided the control convergence rate matches the barrier decay rate, highlighting a trade-off between safety and reactiveness.
Such safe control barrier corridors enable safely reachable persistent goal selection over continuously changing barrier corridors during system motion, which we demonstrate for verifiably safe and persistent path following in autonomous exploration of unknown environments.
\end{abstract}

\section{Introduction}
\label{sec.introduction}

Ensuring that autonomous systems operate safely while achieving performance and stability objectives has emerged as a central challenge in robotics and automation. 
Control barrier functions (CBFs) \cite{ames_etal_ECC2019} have recently gained significant traction as a principled tool for enforcing safety by modifying control inputs to bound the safety decay rate. 
CBFs have also been combined with control Lyapunov functions (CLFs) \cite{artstein1983stabilization, sontag1989universal} to reconcile joint stability and safety requirements \cite{jankovic_ACC2017}. 
In parallel, geometrically constructed safe motion corridors have been applied for safe reference planning \cite{marcucci_etal_TRO2024} and combined with reference governor techniques \cite{liu_etal_RAL2017, kolmanovsky_garone_dicairano_ACC2014} to achieve safety guarantees without interfering with stability \cite{arslan_koditschek_ICRA2017}.
Understanding the connections among CBFs, reference governor techniques, and safe motion corridors remains relatively unexplored and has the potential to offer new insights, combine their strengths, and mitigate their limitations (e.g., the proper selection of the CBF safety decay rate relative to control convergence and the systematic extension of individual state safety to local safety zones for complex control systems) \cite{liang_etal_ACC2024, freire_debarshi_nicotra_CSL2025}.

\begin{figure}[t]
\centering
\begin{tabular}{@{}c@{}}
\hline
\hline
\footnotesize{From Control Barrier Functions To Control Barrier Corridors}
\\
\hline
\\ [-3mm]
\scalebox{0.69}{\hspace{-3mm}
$
\left .
\begin{array}{l}
\bullet \text{ Barrier Safety: } \\
\hspace{4mm} h_i(\state) \geq 0  \quad \forall i\!=\!1,...,\nbarrier\\
\bullet \text{ Safety Feasibility: }\\
\hspace{4mm} \dot{h}_{i}(\state) \geq - \alpha (h_i(\state)) \quad \forall i \\
\bullet \text{ System Dynamics: }  \\
\hspace{4mm} \dot{\state} = f(\state, \ctrl), \\
\bullet \text{ Goal Control: } \\
\hspace{4mm} \ctrl = k_{\goal}(\state)
\end{array}
\hspace{-2.0mm}
\right \} 
\hspace{-1mm} 
\begin{array}{l}
\bullet \text{ Generic Barrier Corridor: }\\ 
\quad \!  \mathrm{BC}(\vect{x})\!=\! \bigcap\limits_{i=1}^{\nbarrier}\!\clist{\goal \!\!\in  \! \R^{n} \Big |  \tr{\nabla h_i(\state)\!} \!f(\state, k_{\goal}(\state)\!) \!\geq\! - \barrierrate(h_i(\state)\!)\!}
\\
\\
\bullet \text{ Barrier Corridor for Proportional Control: }\\ 
\quad \!  \barriercorridor(\state) \hspace{-7.5mm} \underset{\substack{\dot{\state} = \ctrl \\ \ctrl = -\ctrlgain(\state - \goal)}}{\Big |} \hspace{-8mm}= \! \bigcap\limits_{i=1}^{\nbarrier}\! \clist{\goal \!\!\in \!\R^{n} \Big |\! -\!\kappa \tr{\nabla h_i(\vect{x})\!} \!(\vect{x} \!-\! \goal\!) \!\geq\! - \barrierrate (h_i(\state)\!)\!}
\end{array}
\hspace{-3mm}
$
}
\\
\\[-3mm]
\hline
\hline
\\[-3mm]
\begin{tabular}{@{}c@{\hspace{1mm}}c@{\hspace{1mm}}c@{}}
\includegraphics[width=0.55\columnwidth]{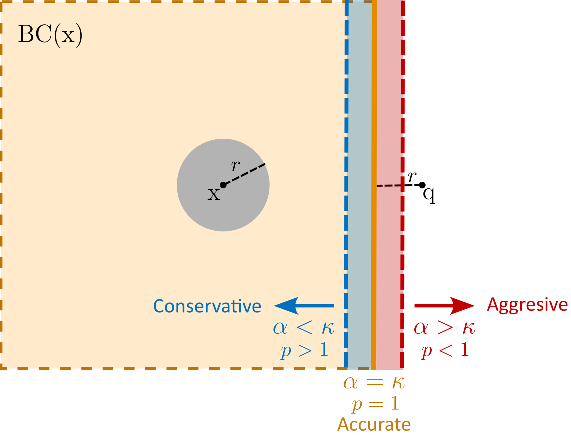}
\raisebox{5.5mm}{\includegraphics[width=0.36\columnwidth, trim=10mm 10mm 10mm 10mm, clip]{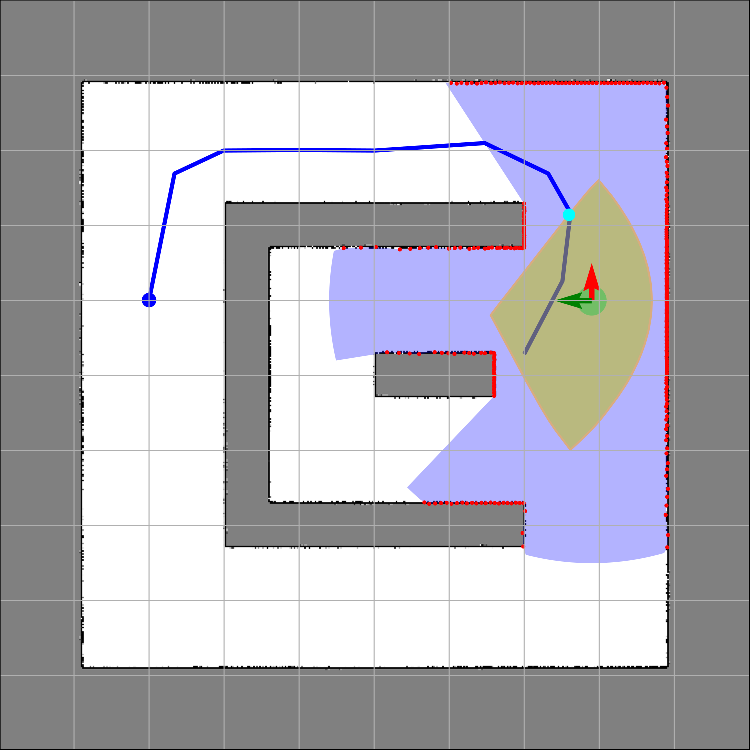}}
\end{tabular}
\end{tabular}
\vspace{-5mm}
\caption{Control barrier corridors turn CBF safety specification on control inputs into a geometric representation of safe goals for state feedback control. For instance, a control barrier corridor $\barriercorridor(\state)$ around a safe state $\vect{x}$ of a fully actuated system $\dot{\vect{x}} = \vect{u}$ under proportional control $\ctrl = -\ctrlgain(\state - \goal)$ is the set of goal states $\goal$ that ensure safe control based on the feasibility condition $\dot{h}_i(\state)\geq -\alpha (h_i(\state))$ for a set of CBFs $h_1(\state), \ldots, h_{\nbarrier}(\state)$.
(left) The control barrier corridor of a fully actuated circular robot, centered at $\state$ with radius $r > 0$, relative to a point obstacle at $\vect{q}$, constructed based on the power-distance barrier function $h(\state) = \norm{\state - \vect{q}}^ p - r^p$. The control barrier corridor enlarges with an increasing ratio of the barrier decay rate to the proportional control gain, $\frac{\barrierrate}{\ctrlgain}$, and shrinks with the increasing order $p$ of the power distance.
(right) The control barrier corridor (yellow) of a fully actuated circular robot (cyan+red arrow), constructed relative to multiple sensed obstacle points (red), captures a local collision-free space around the robot for convex barrier functions (e.g., the power distance with $p \geq 1$) under identical linear barrier decay rate and proportional control gain, $\barrierrate = \ctrlgain$.
This allows for selecting a safely reachable goal (cyan) for following a reference path (blue).}
\label{fig.control_barrier_corridors}
\vspace{-4mm}
\end{figure}

This paper introduces a new unifying approach to transform the functional safety constraints of CBFs into geometric safety restrictions on reference goals for feedback control systems. Specifically, we consider a given stabilizing control law for the system and study the set of goals (closed-loop equilibria) that satisfy CBF safety constraints. We refer to this region of safe closed-loop system goals as a \emph{control barrier corridor}. 
Our key insight is that individual state safety imposed through CBFs can be locally extended into a control barrier corridor under appropriate conditions on the barrier functions, their decay profiles, and the closed-loop system convergence rate, explicitly revealing a trade-off between safety and reactiveness. In particular, we show that, if (i) the barrier functions are convex and (ii) the barrier decay rate matches the closed-loop system convergence rate, then simultaneously safely reachable and stable reference goals can be selected from control barrier corridors.
This enables verifiably safe and persistent path following of a mobile robot by continuously chasing a path goal within the sensed control barrier corridors, as illustrated in \reffig{fig.control_barrier_corridors}.

\subsection{Motivation and Related Literature}

\vspace{-1mm}

\paragraph{Control Barrier Functions} 
Safe filtering of control inputs using CBF constraints has become a de facto tool for enforcing safety for complex control systems. A particularly attractive property is that, for any control-affine system, the CBF constraints are linear in the control input, which enables real-time safe control synthesis via convex quadratic optimization \cite{ames_etal_ECC2019}. While CBF techniques are elegant and efficient, they modify the control inputs directly and, thus, interfere with the stability properties of the closed-loop system. It has been shown that the common strategy of imposing both CBF and CLF constraints and relaxing the stability constraints if safety is endangered leads to undesirable equilibria \cite{reis2021undesirable,Yi_CLFCBFDCP_LCSS23,tan2024undesired}. Avoiding this requires the satisfaction of compatibility conditions \cite{romdlony2016clbf,braun2020comment,mestres2023roa} and potentially modifying the CBF or CLF constraints to ensure that they are compatible \cite{wang2018permissive,braun2019cclf,meng2022converse,dai2024compatibility}.
The performance and conservatism of CBF-based safety filtering also significantly depend on the selection of the barrier decay profiles, and the online adaptation of barrier decay rates can mitigate conservatism \cite{parwana_panagou_ACC2025, kim_kee_panagou_ICRA2025, ong_etal_CDC2025}.
Our analysis provides new connections to recent work on the compatibility of safety and stability \cite{dai2024compatibility, Yi_CLFCBFDCP_LCSS23}, as well as on the proper selection of the barrier decay rate relative to control convergence rate \cite{parwana_panagou_ACC2025, kim_kee_panagou_ICRA2025, ong_etal_CDC2025}.

\paragraph{Reference Governors} Safety filtering of reference goals for pre-stabilized feedback control systems via reference governor techniques \cite{bemporad1998reference, gilbert2002reference, kolmanovsky_garone_dicairano_ACC2014} is attractive because it ensures safety without interfering with stability. 
Explicit reference governors can achieve real-time performance comparable to CBF-based safety filters \cite{garone_nicotra_TAC2016}.
The key idea of reference governors is that rather than altering the feedback law itself, a virtual reference governor system is used to filter the desired reference trajectory so that the resulting closed-loop system remains within the prescribed safe sets. 
This is achieved by determining invariant sets (e.g., Lyapunov level sets \cite{garone_nicotra_TAC2016} and maximal output admissible sets \cite{gilbert_tan_TAC1991}) to determine how quickly the reference can evolve while ensuring future trajectories remain safe. 
Local collision-free zones, constructed as safe motion corridors, are also utilized by designing stabilizing feedback laws that also ensure safe zone invariance \cite{arslan_koditschek_IJRR2019}.
Reference governors have been demonstrated in safe trajectory tracking control for integrator \cite{arslan_koditschek_ICRA2017, Isleyen2022lowtohigh}, linear \cite{li2020directional, burlion2022RGlinear, Li_GCBF_Automatica23}, unicycle \cite{Isleyen2023safeunicycle1,Isleyen2023safeunicycle2,tarshahani_isleyen_arslan_ECC2024}, and more complex systems, such as aerial vehicles or manipulators~\cite{Tartaglione2024uav,Merckaert2022manipulators,Merckaert2024manipulators}.
In this work, we present the use of control barrier corridors as a local safety zone in reference-governor design for the safe path following of a mobile robot with unicycle~dynamics.  

\paragraph{Safe Motion Corridors} Safe motion corridors are also employed as geometrically constructed convex planning and control constraints for safe, smooth, and dynamically feasible path planning \cite{liu_etal_RAL2017, marcucci_etal_TRO2024}, as well as for reactive navigation control via reference governors \cite{arslan_koditschek_IJRR2019}.
Such safe corridors are often constructed as a local collision-free geometric zones around obstacles in various convex shapes, including polytopes \cite{liu_etal_RAL2017, deits_tedrake_AFR2015}, boxes \cite{marcucci_etal_TRO2024, chen_liu_shen_ICRA2016}, and spheres \cite{gao_etal_JFR2019}, by maximizing their volumes to approximately cover the collision-free configuration space of robot systems. 
Safe motion corridors constructed using generalized Voronoi diagrams are applied for safety encoding and safe coordination control of multi-robot systems \cite{zhou_wang_banyopadhyay_schwager_RAL2017, zhu_brito_alonso-mora_AR2022, arslan_koditschek_ICRA2016}.
In fact, safe corridors constructed based on separating hyperplanes of convex shapes (e.g., spheres) \cite{zhou_wang_banyopadhyay_schwager_RAL2017, zhu_brito_alonso-mora_AR2022, arslan_koditschek_ICRA2016} are a special case of control barrier corridors for fully actuated systems, where the barrier functions are given by the Euclidean distance to obstacles, with proportional control gain matched to the barrier decay rate (as in \refprop{prop.geometry_meets_dynamics}).
Therefore, we design control barrier corridors to provide a unified, functionally generated extension and generalization of  geometrically constructed safe motion corridors by employing alternative (e.g., higher-order) barrier functions for complex control systems~\cite{xiao_belta_TAC2022}.

\subsection{Contributions and Organization of the Paper}

This paper introduces a new notion of control barrier corridors that systematically convert the functional safety requirements of control barrier functions into local geometric goal zones for safe reference selection in feedback control systems, blending the geometry and dynamics of safety.
In summary, the three major contributions of our paper are:

\quad $\bullet$ We propose a generic construction of control barrier corridors for feedback control systems using control barrier functions, as the collection of all goal states around a safe system state that ensure the maintenance of barrier safety under feedback control.

\quad $\bullet$ We show that control barrier corridors of fully actuated systems under proportional control encode not only robot safety but also goal safety when the control barrier functions are convex and the proportional control gain matches the linear barrier decay rate.
We also discuss how this extends to unicycle and linear systems, as well as the role of the convexity of control barrier functions on safety.

\quad $\bullet$ We apply control barrier corridors to select safely reachable local goals for sensor-based safe and persistent path following control of a mobile robot in unknown cluttered environments.

\smallskip


The rest of the paper is organized as follows.
\refsec{sec.barrier_functions_to_barrier_corridors} describes how to convert control barrier functions into control barrier corridors.
\refsec{sec.example_control_barrier_corridors} presents example constructions of control barrier corridors and discusses their key properties.
\refsec{sec.safe_path_following} demonstrates an application of control barrier corridors for local goal selection in safe and persistent path following.
\refsec{sec.conclusions} concludes with a summary of our work and key results and outlines directions for future research.

\section{\!\!From Barrier Functions to Barrier Corridors\!\!}
\label{sec.barrier_functions_to_barrier_corridors}

In this section, we review CBFs for safety-critical control systems~\cite{ames_etal_ECC2019} and present the construction of control barrier corridors for goal-parametrized feedback control systems.

\subsection{Control Barrier Functions}
\label{sec.control_barrier_functions}

Consider a nonlinear control system with state $\state \in \R^{n_{\state}}$ and control input $\ctrl \in \R^{n_{\ctrl}}$ and dynamics specified by a Lipschitz continuous function $f: \R^{n_{\state}} \times \R^{n_{\ctrl}} \rightarrow \R^{n_{\state}}$:
\begin{align}
\dot{\state} = f(\state, \ctrl).
\end{align}
Let $\mathcal{X} \subseteq \R^{n_{\state}}$ denote a set of \emph{safe} system states.
Control barrier functions are a widely used tool for safe control design, extending state safety encoded by barrier functions to safe state evolution by limiting the safety decay rate and enforcing forward set invariance at the boundary \cite{blanchini_Automatica1999}.

\begin{definition} \label{def.control_barrier_function}
\emph{(Control Barrier Functions)}
A continuously differentiable function $\barrierfunc\!:\R^{n_{\state}} \!\rightarrow\! \R$ is a \emph{control barrier function} for the system $\dot{\state} = f(\state, \ctrl)$~if
\begin{itemize}
\item (\emph{Safe}) $h(\state) \geq 0$ for all safe system states $\state \in \mathcal{X}$ and negative otherwise\footnote{Since encoding safety with a single barrier function is difficult for complex systems, safe states are often determined through the simultaneous nonnegativity of multiple barrier functions.}.

\item (\emph{Nonsingular}) $\nabla h(\state) \neq 0$ when $h(\vect{x}) = 0$.
    
\item (\emph{Feasible}) For any safe state $\state \in \mathcal{X}$ (i.e., $h(\state) \geq 0$), there exist some control input $\ctrl \in \R^{n_{\ctrl}}$  such that 
\begin{align*}
\dot{h}(\state) = \tr{\nabla h(\state)} f(\state, \ctrl) \geq - \barrierrate(h(\state)) 
\end{align*} 
where $\nabla h(\state)$ denotes the gradient of $h(\state)$ with respect to $\state$, and $\barrierrate : \R \rightarrow \R$ is a strictly increasing continuous function with $\barrierrate(0) = 0$ (i.e., a class $\mathcal{K}_{\infty}$ function).
\end{itemize}
\end{definition}

Intuitively, the nonnegativity of $\barrierfunc(\state)$ implicitly specifies the set membership of safe system states in $\mathcal{X}$; the nonsingularity of the gradient $\nabla \barrierfunc(\state)$ at the critical safety boundary, where $\barrierfunc(\state) = 0$, is required to clearly indicate the direction of increasing safety; and finally, the feasibility of $h(\state)$ ensures that the system can remain safe under some control input, even with a potentially bounded decrease in safety.

Finding a single control barrier function to encode the safety of complex systems (e.g., a mobile robot moving in a cluttered environment with complex-shaped obstacles) is a known challenge. 
Alternatively, multiple simple barrier functions, say $h_1(\state), \ldots, h_{\nbarrier}(\state)$, are often used simultaneously for determining safety of a general nonlinear control system $\dot{\state} = f(\state, \ctrl)$.
They also enable safety filtering of a desired control policy $\ctrl_d(\state)$ for the system $\dot{\state} = f(\state, \ctrl)$ via metric projection and optimization, in order to minimally modify the reference control while ensuring safety feasibility, as  \cite{ames_xu_grizzle_tabuada_TAC2017}
\begin{align}\label{eq.safety_filtering}
\begin{array}{r@{\hspace{2mm}}l}
\underset{\ctrl \, \in \, \R^{n_{\ctrl}}}{\mathrm{minimize}} & \norm{\ctrl - \ctrl_{d}(\state)}^2 \\
\mathrm{subject\, to} & \tr{\nabla h_i(\vect{x})\!} f(\vect{x}, \ctrl)\! \geq \! -\alpha(h_i(\vect{x})\!) \quad\!\! \forall i\!=\!1,\ldots, \nbarrier. \hspace{-6mm} 
\end{array}
\end{align}  
To take advantage of convexity, safety filtering is often applied to control-affine systems of the form $\dot{\state} = f(\state) + g(\state)\ctrl$, so that the optimal safety filtering problem becomes a convex quadratic optimization, which can be solved efficiently using state-of-the-art off-the-shelf solvers \cite{boyd_vandenberghe_ConvexOptimization2004}.
Composing multiple barriers into a single function using their (soft) minimum and product is also used for the explicit analytical solution of optimal safety filters for control-affine systems \cite{molnar_ames_CSL2023}, but this often leads to the relatively unexplored issue of convexity loss in control barrier functions (as highlighted in \refrem{rem.multiple_control_barrier_composition} below).
In this paper, we show that the convexity of control barrier functions is a critical feature for relating and extending the functional safety of individual robot states to a local geometric safety zone around a safe state via control barrier corridors (see \refprop{prop.geometry_meets_dynamics}).
To demonstrate the significance of convexity in control barrier functions, we consider the $p^{\text{th}}$-order power distance as a classical safety measure for circular robots around multiple obstacle points, defined as the $p^{\text{th}}$ power of the Euclidean distance between the robot position $\state$ and an obstacle point $\vect{q}$ as 
\begin{align}
h_{\mathrm{pwr}}(\state) := \norm{\state - \vect{q}}^{p} - r^p,
\end{align}
whose gradient and Hessian are, respectively, given by
\begin{align}
\nabla h_{\mathrm{pwr}}(\state) &= p \norm{\state - \vect{q}}^{p-2} (\state - \vect{q}),
\\
\nabla^2 h_{\mathrm{pwr}}(\state) & = p \norm{\state - \vect{q}}^{p-2} \plist{\mat{I} + (p-2)  \tfrac{(\state - \vect{q})\tr{(\state - \vect{q})}}{\norm{\state - \vect{q}}^2}},
\end{align}
where \mbox{$r > 0$} represents the barrier safety margin, for instance, corresponding to the sum of the robot radius, obstacle radius, and a safety tolerance.
Note that the power-distance barrier $h_{\mathrm{pwr}}(\state)$  is convex for $p \geq 1$ and strictly convex for $p > 1$, with strong convexity holding for $p = 2$.

\subsection{Control Barrier Corridors}
\label{sec.control_barrier_corridors}

Control barrier functions are primarily utilized to determine the safety of individual robot states and the control constraints for maintaining safety, but their connection to local safety around a given robot state remains unexplored.
To study and extend the safety of individual robot states to a local safety zone using control barrier functions and their feasibility, we consider a given goal-parametrized state-feedback control (or goal control, for short) $\ctrl = k_{\goal}(\state)$ that can bring the system $\dot{\state} = f(\state, \ctrl)$ to any goal state $\goal \in \R^{n_{\state}}$.
Accordingly, we define the \emph{control barrier corridor} associated with a set of control barrier functions and a goal control policy of a control system as the set of all goal states that induce safe feedback control for the closed-loop system:
\begin{definition}\label{def.control_barrier_corridors}
(\emph{Control Barrier Corridors})
For a given set of control barrier functions $h_1(\state), \ldots, h_{\nbarrier}(\state)$ and a goal control policy $\ctrl = \goalctrl(\state)$ of the control system $\dot{\state} = f(\state, \ctrl)$, the \emph{control barrier corridor}, denoted by $\barriercorridor(\state)$, around a safe robot state $\state \in \R^{\nstate}$ (with $h_{i}(\state) \geq 0$ for all $i = 1, \ldots, \nbarrier$) of the closed-loop control system $\dot{\state} = f(\state, \goalctrl(\state))$ is defined as the set of all goal states $\goal \in \R^{\nstate}$ that result in a safe control input under the goal control $\ctrl = \goalctrl(\state)$, as 
\begin{align}\label{eq.control_barrier_corridor}
\!\!\!\barriercorridor(\state) \! := \!\bigcap_{i=1}^{\nbarrier} \clist{\goal \!\! \in \!\R^{\nstate} \!\Big | \tr{\nabla h_i(\state)\!}\! f(\state, k_{\goal\!}(\state)\!)  \!\geq\!  - \barrierrate(h_i(\state)\!)\!} \!\!\!
\end{align}
where $\alpha(.)$ is a class-$\mathcal{K}_{\infty}$ barrier decay function. 
\end{definition}

\noindent Hence, instead of directly considering control inputs, we focus on goal states for feedback control to investigate how control barrier functions encode safety geometrically.
Note that if the goal control satisfies \mbox{$f(\state, \goalctrl(\state)) = 0$} for all \mbox{$\state = \goal \in \R^{\nstate}$}, then the barrier corridor $\barriercorridor(\state)$ in \refeq{eq.control_barrier_corridor} is a local neighborhood of the safe robot state $\state$, i.e., $\state \in \barriercorridor(\state)$.
Hence, a natural question is whether, and under what conditions, the goal states in the barrier corridor $\barriercorridor(\state)$ partially or fully share the safety properties of the robot state $\state$.
It is also important to observe that a control barrier corridor $\barriercorridor(\state)$ can, in general, have an arbitrarily complex shape and topological connectivity, depending on the functional properties of the control barrier functions $h_{i}(\state)$, the system dynamics $f(\state, \ctrl)$, and the goal control policy $\goalctrl(\state)$.
Hence, in order to extend individual safety to a local safety zone, we study below some example control barrier corridors of various control systems by drawing parallels with the convexity of control barrier functions, i.e.,
\begin{align*}
h_i(\goal) \geq h_i(\state) + \nabla h_i(\state)(\goal - \state).
\end{align*}  

\section{Examples of Control Barrier Corridors}
\label{sec.example_control_barrier_corridors}

In this section, to gain more insight about the geometry of safety encoded by control barrier functions, we present example constructions of control barrier corridors for fully actuated systems with proportional control, kinematic unicycle systems with inner-outer loop control, and linear time-invariant systems with output regulation control. 
An important observation is that individual state safety can be locally extended if the control barrier functions are convex and the control convergence rate matches the barrier decay~rate.

\subsection{Control Barrier Corridors of Fully Actuated Systems}

As the first example, assuming full-state feedback linearization of control systems, we consider a simple yet nontrivial and informative case: the first-order fully actuated kinematic system model with proportional feedback control,
\begin{align}
\dot{\state} = \ctrl, \quad  \text{and} \quad  \ctrl = -\ctrlgain(\state - \goal),
\end{align}      
where $\ctrlgain > 0$ is a fixed control gain determining the exponential convergence rate of the closed-loop system to any given goal state $\goal \in \R^{\nstate}$. 
For simplicity, we also consider a linear barrier decay rate function $\alpha(x) = \alpha x $, which corresponds to an exponential decay rate for the lower bound of the barrier function value.
Accordingly, for a given set of control barrier functions $h_1(\vect{x}), \ldots, h_{\nbarrier(\state)}$ of the fully actuated system \mbox{$\dot{\state} = \ctrl$}, under proportional goal control \mbox{$\ctrl = -\ctrlgain(\state - \goal)$}, the control barrier corridor $\barriercorridor_{\mathrm{full}}(\state)$ around a safe robot state $\vect{x} \in \R^{\nstate}$ (with $h_i(\vect{x}) \geq 0$ for all $i$) contains all possible goal states $\goal \in \R^{\nstate}$ that ensure safety feasibility and is given by%
\begin{align}\label{eq.full_control_barrier_corridor}
\!\!
\scalebox{0.95}{$\barriercorridor_{\mathrm{full}}(\state) \!=\! \bigcap\limits_{i=1}^{\nbarrier} \clist{\goal \!\! \in\! \R^{\nstate} \!\Big |\! -\! \ctrlgain \tr{\nabla h_i(\state)\!}\! (\state \!-\! \goal \!)\! \geq\! -\barrierrate h_i(\state)\!}$} \!\!\!
\end{align}
where $\kappa >0$ is the proportional control gain and $\alpha >0$ is the linear barrier decay rate. 
Note that the control barrier corridor $\barriercorridor_{\mathrm{full}}(\state)$ is a convex intersection of halfspaces and contains the safe robot state $\state \in \barriercorridor_{\mathrm{full}}(\state)$.
Hence, the metric projection (i.e., the closest point) onto the full control barrier corridor $\barriercorridor_{\mathrm{full}}(\state)$  can be computed efficiently using state-of-the-art off-the-shelf solvers \cite{boyd_vandenberghe_ConvexOptimization2004}.
Moreover, its convex shape depends on the control barrier functions and the ratio of the control gain $\kappa$  and the barrier rate $\alpha$.
An important question is: how do the control barrier functions influence the barrier corridor, and what are the potential consequences of (mis)matching the control convergence rate with the barrier decay rate?
We observe that a higher barrier decay rate increases system confidence, aggressiveness, and risk-taking, whereas a higher control convergence rate leads to caution, conservativeness, and risk avoidance (see \reffig{fig.full_control_barrier_corridor_dependency}).   
We show below that, in order to extend robot safety locally within the control barrier corridor, the convergence rate of the control and the allowed decay rate of barrier safety need to match (revealing a trade-off between safety and reactiveness) and the control barrier functions needs to be convex. 

\begin{figure}[t]
\centering
\begin{tabular}{@{}c@{\hspace{1mm}}c@{\hspace{1mm}}c@{}}
\raisebox{15mm}{\rotatebox{90}{\scalebox{0.5}{$p = 2$}} \hspace{-2mm}}
\includegraphics[width=0.32\columnwidth, trim=10mm 10mm 10mm 10mm, clip]{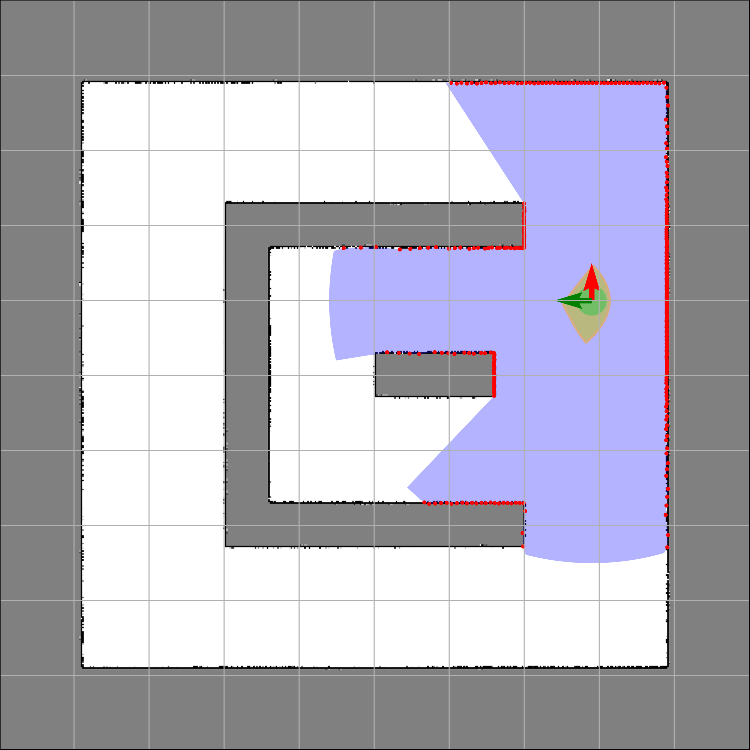}
&
\includegraphics[width=0.32\columnwidth, trim=10mm 10mm 10mm 10mm, clip]{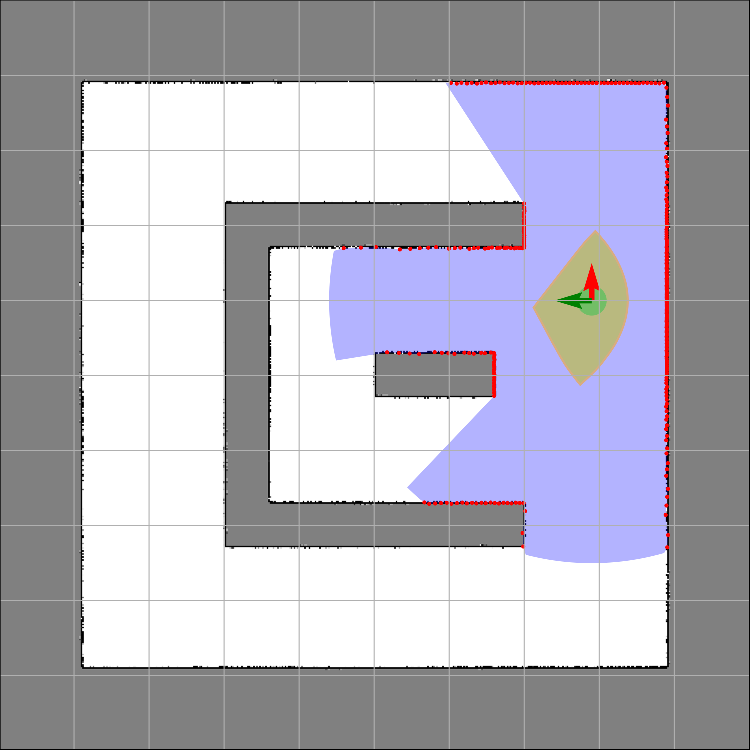}
&
\includegraphics[width=0.32\columnwidth, trim=10mm 10mm 10mm 10mm, clip]{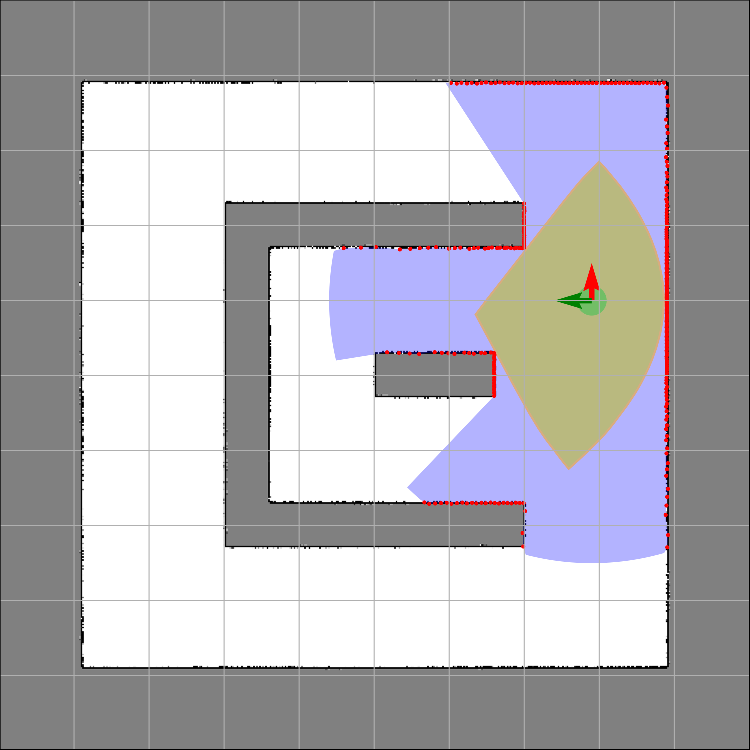}
\\
\raisebox{15mm}{\rotatebox{90}{\scalebox{0.5}{$p = 1$}} \hspace{-2mm}} 
\includegraphics[width=0.32\columnwidth, trim=10mm 10mm 10mm 10mm, clip]{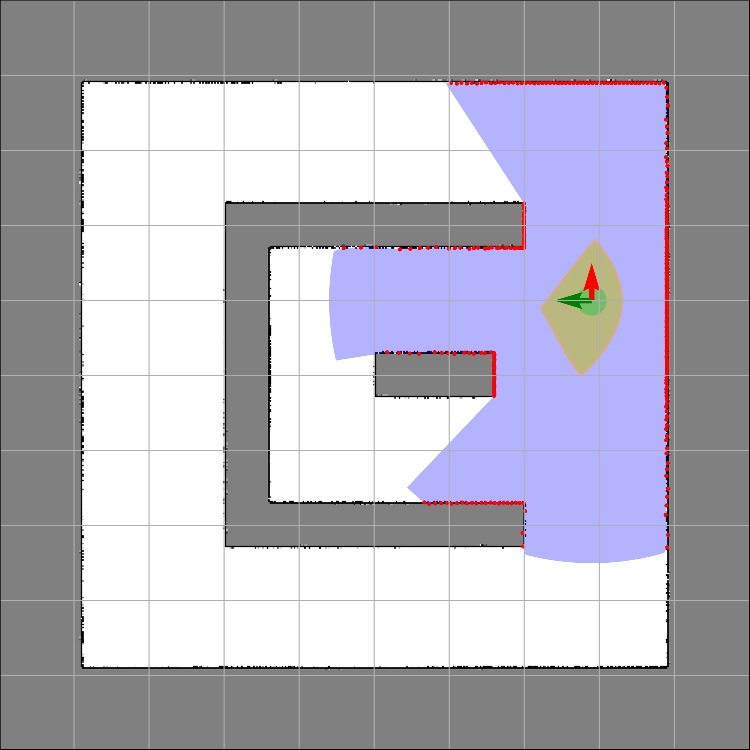}
& 
\includegraphics[width=0.32\columnwidth, trim=10mm 10mm 10mm 10mm, clip]{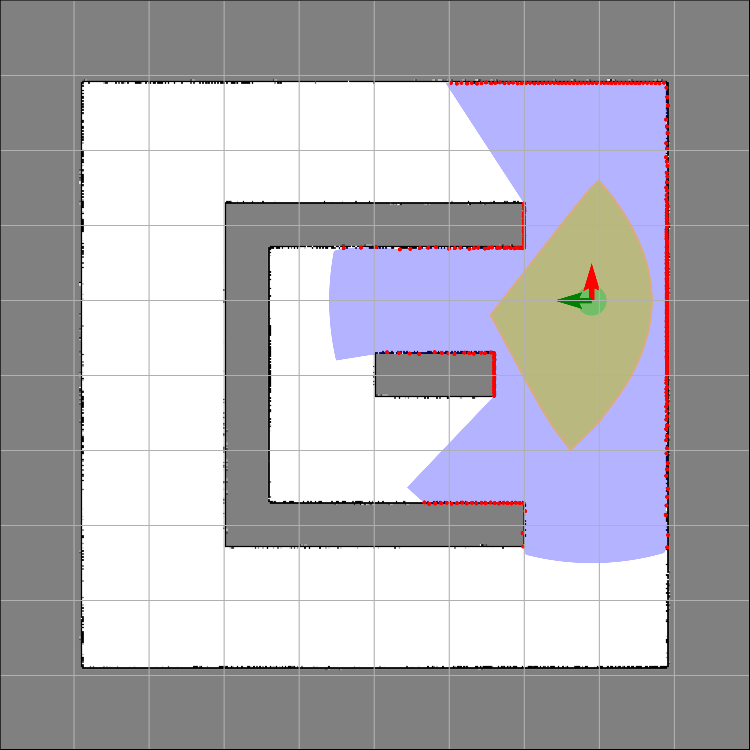}
& 
\includegraphics[width=0.32\columnwidth, trim=10mm 10mm 10mm 10mm, clip]{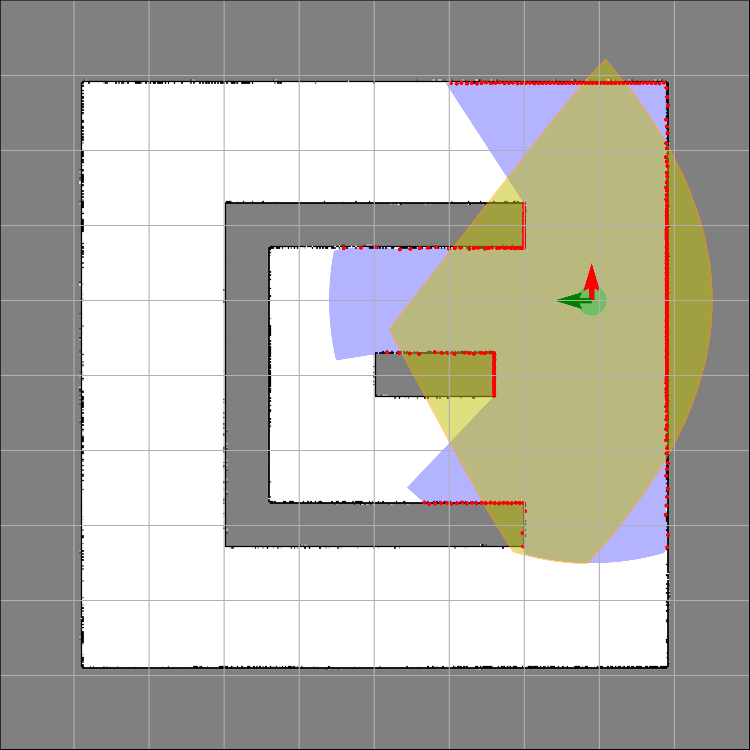}
\\
\raisebox{15mm}{\rotatebox{90}{\scalebox{0.5}{$p = 0.5$}} \hspace{-2mm}} 
\includegraphics[width=0.32\columnwidth, trim=10mm 10mm 10mm 10mm, clip]{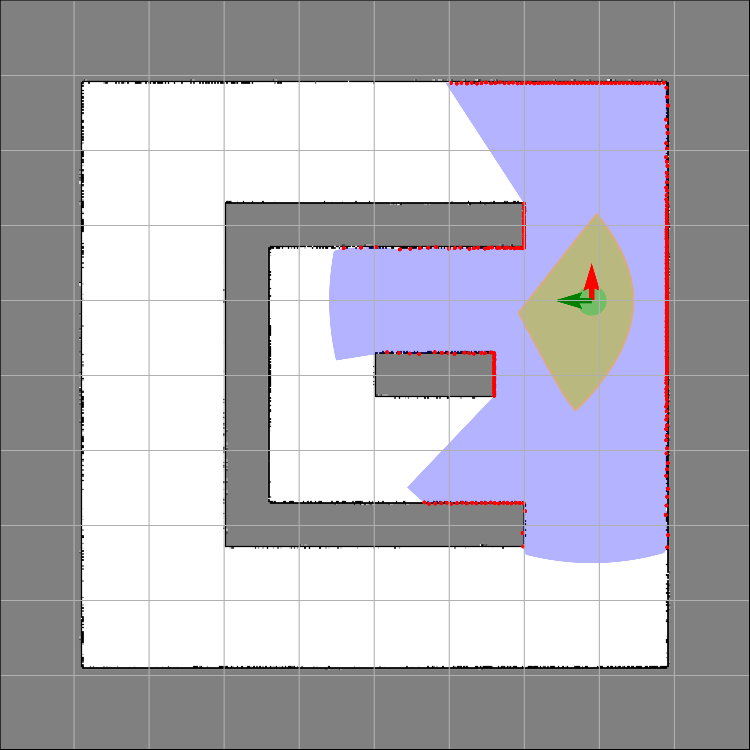}
&
\includegraphics[width=0.32\columnwidth, trim=10mm 10mm 10mm 10mm, clip]{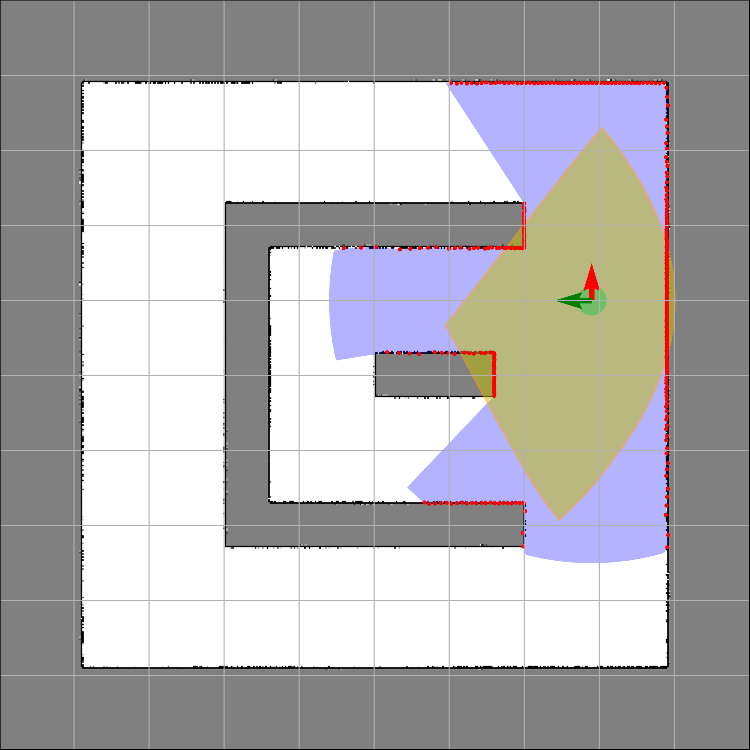}
& 
\includegraphics[width=0.32\columnwidth, trim=10mm 10mm 10mm 10mm, clip]{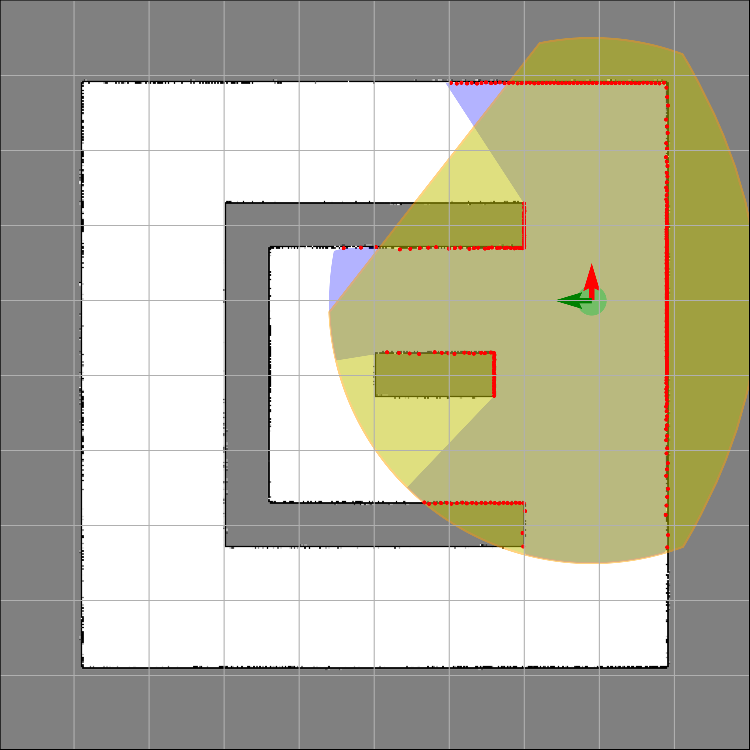}
\\[-2mm]
\scalebox{0.5}{$\alpha = 0.5 \kappa$} & \scalebox{0.5}{$\alpha =\kappa$} & \scalebox{0.5}{$\alpha = 2 \kappa$}   
\end{tabular}
\vspace{-3mm}
\caption{The influence of the proportional control gain $\ctrlgain$, barrier decay rate $\barrierrate$, and the order $p$ of the power distance on control barrier corridors (yellow) of a fully actuated robot (cyan) relative to sensed obstacle points (red).
A control barrier corridor corresponds to a local safe zone around the robot (cyan), free of obstacles (red), for a pair of matching barrier rate and control gain ($\barrierrate=\ctrlgain$), and convex barrier functions ($p \geq 1$).
The conservativeness of the barrier corridor increases with higher control gain ($\barrierrate < \ctrlgain$) and higher barrier convexity ($p > 1$), and it becomes less accurate with increasing barrier rate ($\barrierrate > \ctrlgain$) and increasing barrier concavity ($p < 1$).     
}
\label{fig.full_control_barrier_corridor_dependency}
\end{figure}

\begin{example}
\emph{(A Circular Mobile Robot around Obstacles)} Consider a fully actuated circular mobile robot with center position $\state \in \R^2$ and body radius $\radius>0$, moving around point obstacles at $\vect{q}_1, \ldots, \vect{q}_{\nbarrier} \in \R^2$, where the safety distance between the robot and the obstacles is measured using the $p^{\text{th}}$-order power-distance control barrier function, $h_{i}(\state) = \norm{\state - \vect{q}}^p  - \radius^p$. 
The associated control barrier corridor $\barriercorridor_{\mathrm{full}}(\state)$ around the robot position $\state$ is  given by 
\begin{align*}
\scalebox{0.85}{$\barriercorridor_{\mathrm{full}}(\state)\! =\! \bigcap\limits_{i=1}^{\nbarrier} \clist{\goal\!\Big | \tr{(\vect{x} \!-\! \vect{q}_i)\!}\!(\goal \! \!-\! \vect{q}_i) \!\geq\! \tfrac{p \ctrlgain - \barrierrate}{p\ctrlgain} \norm{\state \!-\! \vect{q}_i}^2 + \tfrac{\barrierrate}{p\ctrlgain}  \tfrac{\radius^2}{\norm{\state - \vect{q}_i}^{p-2}}}.$}
\end{align*}
As illustrated in \reffig{fig.full_control_barrier_corridor_dependency},  when the control gain and the barrier decay rate are equal (i.e., $\barrierrate = \ctrlgain$) and the true Euclidean distance to collision (i.e., $p=1$) is used, the control barrier corridor tightly separates the obstacle from the robot using supporting hyperplanes tangent to the configuration space obstacles. For larger values of the barrier decay rate (i.e., $\barrierrate > \ctrlgain$), the robot becomes aggressive, overconfident, and hallucinatory, as the barrier corridor fails to accurately separate the configuration space obstacle. Conversely, for larger values of the control gain (i.e., $\barrierrate < \ctrlgain$), the robot behaves conservatively, underconfident, and panicked, as the barrier corridor includes an extra margin from the configuration space obstacles.
Also note that increasing the order $p$ of the power distance increases the strength of convexity, which also results in increasing conservativeness.     
\end{example}

Hence, a critical question is: what determines the optimal ratio between the control gain $\kappa$ and the barrier decay rate $\alpha$, and how does this ratio relate to the convexity of the control barrier functions?
A potential answer is when the geometry of safety aligns with the dynamics of safety.
  
\begin{proposition}\label{prop.geometry_meets_dynamics}
\emph{(The Geometry of Safety Meets the Dynamics of Safety)} If the control barrier functions $h_{1}(\state), \ldots, h_{\nbarrier}(\state)$ are convex%
\footnote{Note that if the control barrier functions $h_{i}(\state)$ are strictly convex, i.e.,
\begin{align*}
h_i(\vect{y}) > h_i(\vect{x}) + \tr{\nabla h_i(\vect{x})} (\vect{y} - \vect{x}) \quad \forall \vect{y} \neq \vect{x},
\end{align*}
then the control barrier corridor $\barriercorridor_{\mathrm{full}}(\state)$ of a safe state $\state$ (with $h_i(\state) \geq 0$ for all $i$) of a fully actuated system consists of strictly safe goals, i.e.,
\begin{align*}
h_i(\goal) > 0 \quad \forall \goal \in  \barriercorridor_{\mathrm{full}}(\state)\setminus \clist{\state}.
\end{align*}
Similarly,  if the control barrier functions $h_i(\state)$ are strongly convex, i.e.,
\begin{align*}
h_i(\vect{y}) \geq h_i(\vect{x}) + \tr{\nabla h_i(\vect{x})} (\vect{y} - \vect{x}) + \frac{\mu}{2}\norm{\vect{y} - \vect{x}} \quad \forall \vect{y}, \vect{x}
\end{align*}
then the control barrier corridor $\barriercorridor_{\mathrm{full}}(\state)$ includes strongly safe goals, i.e., 
\begin{align*}
h_i(\goal) \geq  \frac{\mu}{2}\norm{\goal - \state}^2 \quad \forall \goal \in \barriercorridor_{\mathrm{full}}(\state),
\end{align*}
where $\mu > 0$ is the strength of convexity and the quadratic growth tendency.
}
and the proportional control gain $\ctrlgain$ and the barrier decay rate $\barrierrate$ are the same (i.e., $\barrierrate = \ctrlgain$), then any goal state $\goal \in \barriercorridor_{\mathrm{full}}(\vect{x})$ in the convex control barrier corridor
\begin{align*}
\barriercorridor_{\mathrm{full}}(\state)\!=\! \bigcap\limits_{i=1}^{\nbarrier}\clist{\goal \in \R^{\nstate} \Big | h_i(\state) +  \tr{\nabla h_i(\state)\!}\! (\goal \!- \!\state) \!\geq \! 0}
\end{align*}
of a safe state $\state$ (i.e., $h_i(\vect{x}) \geq 0$ for all $i$) is also safe, due to the first-order condition of convexity, i.e.,
\begin{align}
h_i(\vect{x}^*) \geq h_i(\vect{x}) + \tr{\nabla h_i(\vect{x})}(\vect{x}^* - \vect{x}) \geq 0,
\end{align} 
which makes $\barriercorridor_{\mathrm{full}}(\state)$ a local safe neighborhood around $\state$.
\end{proposition}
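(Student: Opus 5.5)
The plan is to first rewrite the corridor in the form given in the statement and then apply the first-order convexity inequality to each barrier in turn.

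\emph{Step 1: rewriting the corridor.} Start from the definition \refeq{eq.full_control_barrier_corridor} of $\barriercorridor_{\mathrm{full}}(\state)$ for $\dot{\state} = \ctrl$ with $\ctrl = -\ctrlgain(\state - \goal)$ and linear decay $\barrierrate(x) = \barrierrate x$. The $i$-th constraint reads
\begin{align*}
-\ctrlgain \tr{\nabla h_i(\state)} (\state - \goal) \geq -\barrierrate h_i(\state).
\end{align*}
Since $\ctrlgain > 0$, divide by $\ctrlgain$ and rearrange:
\begin{align*}
\tfrac{\barrierrate}{\ctrlgain} h_i(\state) + \tr{\nabla h_i(\state)} (\goal - \state) \geq 0.
\end{align*}
Setting $\barrierrate = \ctrlgain$ gives exactly the halfspace description $h_i(\state) + \tr{\nabla h_i(\state)}(\goal - \state) \geq 0$ stated in the proposition. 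This set is an intersection of finitely many closed halfspaces, hence convex. It contains $\state$, because at $\goal = \state$ each constraint reduces to $h_i(\state) \geq 0$, which holds since $\state$ is safe.

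\emph{Step 2: safety of every corridor goal.} Fix $\goal \in \barriercorridor_{\mathrm{full}}(\state)$ and an index $i$. Each $h_i$ is continuously differentiable and convex, so the first-order condition $h_i(\goal) \geq h_i(\state) + \tr{\nabla h_i(\state)}(\goal - \state)$ holds. By corridor membership the right-hand side is nonnegative, so $h_i(\goal) \geq 0$. This holds for all $i$, so $\goal$ is safe, and $\barriercorridor_{\mathrm{full}}(\state)$ is a convex safe set containing $\state$.

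\emph{Step 3: the footnote refinements.} These follow the same chain with the convexity inequality replaced.
\begin{itemize}
\item Under strict convexity, the inequality is strict for $\goal \neq \state$, giving $h_i(\goal) > 0$ on $\barriercorridor_{\mathrm{full}}(\state) \setminus \{\state\}$.
\item Under strong convexity, the extra term gives $h_i(\goal) \geq \tfrac{\mu}{2}\norm{\goal - \state}^2$. This needs the squared norm in the definition, which I would note as the intended reading.
\end{itemize}

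There is no real obstacle; the only care points are the following.
\begin{itemize}
\item The sign manipulation when dividing by $\ctrlgain$ must be checked.
\item The proof needs the first-order characterization of convexity, which requires differentiability. This is supplied by Definition \ref{def.control_barrier_function}.
\item The matching $\barrierrate = \ctrlgain$ is used essentially. For $\barrierrate > \ctrlgain$ the coefficient $\tfrac{\barrierrate}{\ctrlgain} > 1$ inflates the halfspace beyond the linearization, and the chain of inequalities breaks. For $\barrierrate < \ctrlgain$ the argument still goes through, since $\tfrac{\barrierrate}{\ctrlgain} h_i(\state) \leq h_i(\state)$, yielding a conservative subset.
\end{itemize}
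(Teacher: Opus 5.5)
Your proposal is correct and matches the paper's proof: divide the corridor constraint by $\kappa>0$, set $\alpha=\kappa$ to obtain the linearization halfspace $h_i(\state)+\tr{\nabla h_i(\state)}(\goal-\state)\geq 0$, and apply the first-order convexity condition. Your additional remarks go somewhat beyond what the paper writes and are also correct. These are the footnote refinements, where you rightly flag the missing square in the strong-convexity inequality, and the observation that $\alpha\leq\kappa$ already suffices because $h_i(\state)\geq 0$.
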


\begin{proof}
The result follows from the definition of convexity (in particular, the first-order condition of convexity) as follows: Any goal $\goal \in \barriercorridor_{\mathrm{full}}(\state)$ satisfies
\begin{align*}
- \ctrlgain \tr{\nabla h_i(\state)}(\state - \goal) \geq - \barrierrate h(\state)
\end{align*}
which can be rearranged for $\barrierrate = \ctrlgain$ into the first-order convexity condition of the control barrier function $h_i$ as
\begin{equation*}
h_i(\vect{x}^*) \geq h_i(\vect{x}) + \tr{\nabla h_i(\vect{x})}(\vect{x}^* - \vect{x}) \geq 0. \qedhere
\end{equation*}
\end{proof}

Another practical question is: Once a goal in the control barrier corridor is selected, how can we ensure that the goal remains safely reachable for all future times to ensure recursive feasibility and persistent goal control?
Accordingly, inspired by higher-order control barrier functions \cite{xiao_belta_TAC2022, tan_cortez_dimarogonas_TAC2022}, and using the constraint of the control barrier corridor, we define the goal-control barrier function $h_{i,\goal}(\state)$ as
\begin{align}\label{eq.goal_control_barrier_function}
h_{i, \goal}(\state) = - \ctrlgain \tr{\nabla h_i(\state)} (\state - \goal) + \barrierrate h_i(\state)
\end{align} 
which results in the following safe control constraint for goal safety for the fully actuated kinematic system $\dot{\vect{x}} = \vect{u}$:
\begin{align}
\dot{h}_{_i,\goal}(\state) &= \tr{\nabla h_{i,\goal} (\state)} \ctrl \geq -\barrierrate h_{i,\goal}(\state)
\end{align}
where the gradient of the goal-control barrier function $h_{i,\goal}(\state)$ is given in terms of  the Hessian $\nabla^2 h_i(\state)$  and gradient $\nabla h_i(\state)$ of the control barrier function $h_{i}(\state)$ as
\begin{align}
\nabla h_{i,\goal} (\state) = - \ctrlgain \nabla^2 h_{i}(\state) (\state - \goal) + (\barrierrate - \ctrlgain) \nabla h_{i}(\state).
\end{align}
The safety feasibility of goal-control barrier functions under proportional control ensures that the same goal can be persistently selected from a continuously evolving barrier~corridor.

\begin{proposition}\label{prop.persistent_safe_goal_control}
\emph{\!(Persistent and Safe Goal Control)} For any set of convex control barrier functions $h_{1}(\state), \ldots, h_{\nbarrier}(\state)$, and any pair of matching the control gain $\ctrlgain$ and the barrier rate $\barrierrate$ (i.e., $\barrierrate = \ctrlgain$), the fully actuated system $\dot{\state} = \ctrl$ under the proportional goal control \mbox{$\ctrl = -\ctrlgain(\state - \goal)$} asymptotically and safely brings any initial safe state $\state(0) \in \R^{\nstate}$ (with $h_i(\state(0)) \geq 0$ for all $i$) to any safe goal $\goal \in \barriercorridor_{\mathrm{full}}(\state(0))$, while ensuring that the goal $\goal$ remains in the barrier corridor $\barriercorridor_{\mathrm{full}}(\state(t))$ along the closed-loop trajectory $\state(t)$ for all future times $t \geq 0$, i.e.,
\begin{align*}
&h_i(\state(t)) \geq 0 \quad \forall t \geq 0, i = 1, \ldots, m, 
\\
&\goal \!\in \barriercorridor_{\mathrm{full}}(\state(t)) \quad \forall t \geq 0,
\\
&\!\!\lim_{t \rightarrow \infty} \state(t) = \goal.
\end{align*}
\end{proposition}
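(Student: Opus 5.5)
The plan is to prove goal invariance $\goal \in \barriercorridor_{\mathrm{full}}(\state(t))$ directly, and to get safety and convergence as corollaries. The closed loop $\dot{\state} = -\ctrlgain(\state - \goal)$ is linear with explicit solution
\begin{align*}
\state(t) = \goal + e^{-\ctrlgain t}(\state(0) - \goal).
\end{align*}
Convergence $\lim_{t\to\infty}\state(t) = \goal$ is then immediate. Moreover, $\state(t)$ lies on the segment between $\state(0)$ and $\goal$ for all $t \ge 0$, namely
\begin{align*}
\state(t) = \lambda(t)\,\state(0) + (1-\lambda(t))\,\goal, \qquad \lambda(t) = e^{-\ctrlgain t} \in (0,1].
\end{align*}

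For safety, I would combine \refprop{prop.geometry_meets_dynamics} with convexity of the safe set. By \refprop{prop.geometry_meets_dynamics}, with $\barrierrate = \ctrlgain$ and convex $h_i$, every $\goal \in \barriercorridor_{\mathrm{full}}(\state(0))$ satisfies $h_i(\goal) \ge 0$. Since each $h_i$ is convex, its superlevel set $\{h_i \ge 0\}$ is not convex in general. Instead I would use
\begin{align*}
h_i(\state(t)) \ge \ldots
\end{align*}
Convexity gives an upper bound along segments, not a lower bound, so the segment argument must be replaced. Use the goal-control barrier function $h_{i,\goal}(\state) = \ctrlgain\big(h_i(\state) + \tr{\nabla h_i(\state)}(\goal - \state)\big)$, which is \refeq{eq.goal_control_barrier_function} with $\barrierrate = \ctrlgain$. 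Membership $\goal \in \barriercorridor_{\mathrm{full}}(\state(t))$ is exactly $h_{i,\goal}(\state(t)) \ge 0$, and by the convexity inequality this implies $h_i(\goal) \ge h_{i,\goal}(\state(t))/\ctrlgain$. Safety of $\state(t)$ itself follows from the CBF condition: $h_{i,\goal}(\state) \ge 0$ is precisely $\dot h_i \ge -\barrierrate h_i$ along the flow. So once $h_{i,\goal}(\state(t)) \ge 0$ is established for all $t$, the comparison lemma gives $h_i(\state(t)) \ge e^{-\barrierrate t} h_i(\state(0)) \ge 0$.

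The key step, and the main obstacle, is showing that $h_{i,\goal}(\state(t))$ stays nonnegative. I would differentiate along the closed loop using the gradient formula above with $\barrierrate = \ctrlgain$:
\begin{align*}
\dot h_{i,\goal} = \tr{\big(-\ctrlgain \nabla^2 h_i(\state)(\state - \goal)\big)}\big(-\ctrlgain(\state - \goal)\big) = \ctrlgain^2\, \tr{(\state-\goal)}\nabla^2 h_i(\state)(\state-\goal) \ge 0,
\end{align*}
where the inequality uses positive semidefiniteness of the Hessian, i.e., convexity of $h_i$. Hence $h_{i,\goal}(\state(t))$ is nondecreasing, and
\begin{align*}
h_{i,\goal}(\state(t)) \ge h_{i,\goal}(\state(0)) \ge 0,
\end{align*}
which is exactly $\goal \in \barriercorridor_{\mathrm{full}}(\state(t))$ for all $t \ge 0$. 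This also trivially satisfies the stated constraint $\dot h_{i,\goal} \ge -\barrierrate h_{i,\goal}$.

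If the $h_i$ are only $C^1$, I would avoid the Hessian and instead argue along the ray. Define $\phi(s) = h_i(\goal + s(\state(0)-\goal))$, which is convex in $s$. Then $h_{i,\goal}(\state(t))/\ctrlgain = \phi(\lambda) - \lambda\phi'(\lambda)$, and the map $\lambda \mapsto \phi(\lambda) - \lambda\phi'(\lambda)$ is nonincreasing in $\lambda$ for convex $\phi$. As $\lambda(t)$ decreases in $t$, this quantity increases, giving the same monotonicity. Finally, $h_i(\state(t)) \ge 0$ then follows from \refprop{prop.geometry_meets_dynamics} applied at $\state(t)$ (the state lies in its own corridor), or from the comparison argument above.
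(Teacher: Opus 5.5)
Your proof is correct. Its core step, showing $\dot h_{i,\goal} = \ctrlgain^2 \tr{(\state-\goal)}\nabla^2 h_i(\state)(\state-\goal) \ge 0$ along the closed loop so that $h_{i,\goal}(\state(t)) \ge h_{i,\goal}(\state(0)) \ge 0$, is exactly the paper's persistence argument. Where you differ is safety.

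The paper obtains safety geometrically and independently of persistence. $\barriercorridor_{\mathrm{full}}(\state(0))$ is an intersection of halfspaces, hence convex. It contains both $\state(0)$ (since $h_i(\state(0)) \ge 0$) and $\goal$, so it contains the entire segment traced by $\state(t)$, and every point of that segment is safe by \refprop{prop.geometry_meets_dynamics}. Your dismissal of the segment argument therefore looked at the wrong convex set: $\{h_i \ge 0\}$ is indeed not convex, but the corridor is, and that is all the argument needs.

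You instead derive safety from persistence, using the identity $h_{i,\goal}(\state) = \dot h_i + \barrierrate h_i$ along the flow and the comparison lemma. This is the standard CBF invariance argument. It makes explicit that persistent corridor membership is what keeps the CBF condition satisfied along the motion, at the price of making safety depend on the persistence step.

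Your $C^1$ variant is a genuine improvement over the paper. The Hessian computation tacitly requires $h_i \in C^2$, and the paper's ``positive definite'' should read ``positive semidefinite,'' as you wrote. Your monotonicity claim holds for any convex $C^1$ $\phi$: for $0 \le \lambda_1 < \lambda_2$, the tangent inequality at $\lambda_2$ gives $\phi(\lambda_1) - \lambda_1\phi'(\lambda_1) - \phi(\lambda_2) + \lambda_2\phi'(\lambda_2) \ge \lambda_1\bigl(\phi'(\lambda_2) - \phi'(\lambda_1)\bigr) \ge 0$.

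One correction: your closing alternative, applying \refprop{prop.geometry_meets_dynamics} at $\state(t)$, is circular. That proposition presupposes $h_i(\state(t)) \ge 0$, and ``$\state(t)$ lies in its own corridor'' is itself equivalent to $h_i(\state(t)) \ge 0$. Either apply it at $\state(0)$, as the paper does, or rely on your comparison argument. You should also delete the abandoned fragment ending in $h_i(\state(t)) \ge \ldots$.
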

\begin{proof}
Starting at $t = 0$ from $\state(0) = \state_0$, the closed-loop trajectory of $\dot{\state} = -\ctrlgain (\pos - \goal)$ is given by 
\begin{align*}
\state(t) = e^{-\kappa t} \state_0 + (1 - e^{-\kappa t}) \goal, \quad t \geq 0,
\end{align*}
which converges exponentially to $\goal$ and corresponds to the safe line segment $[\state_{0}, \goal] \subseteq \barriercorridor_{\mathrm{full}}(\state_0)$  within the convex safe barrier corridor  $\barriercorridor_{\mathrm{full}}(\state_0)$, see \refprop{prop.geometry_meets_dynamics}.

Moreover, since $\goal \in \barriercorridor_{\mathrm{full}}(\state_0)$, we have $h_{i,\goal}(\state_0) \geq 0$ at the start at $t = 0$. 
Hence, the persistent inclusion of  the goal in the barrier corridor (i.e., $\goal \in \barriercorridor_{\mathrm{full}}(\state(t))$) follows from the fact that the safety feasibility of the goal-control barrier function is ensured 
under proportional control, i.e.,
\begin{align*}
\dot{h}_{i,\goal}(\state) 
  &= -\ctrlgain \tr{\nabla h_{i,\goal}(\state)} (\state - \goal) \\
  &= \ctrlgain^2 \tr{(\state - \goal)} \nabla^{2} h(\state) (\state - \goal) \\
  &\geq 0 \geq -\alpha h_{i,\goal}(\state),
\end{align*}
since $\nabla h_{i,\goal} (\state) = - \ctrlgain \nabla^2 h_{i}(\state) (\state - \goal)$ for $\barrierrate = \ctrlgain$, and the Hessian of a convex function is positive definite.
\end{proof}

The convexity of control barrier functions plays a critical role in persistent and safe goal control and local safety zone identification, which is difficult to achieve with single control barrier functions and is often unintentionally lost when composing multiple barrier functions into a single one.

\begin{remark}\label{rem.multiple_control_barrier_composition}
\emph{\mbox{(To Compose or Not to Compose)}}
Multiple control barrier functions, $h_1(\state), \ldots, h_{m}(\state)$, can be smoothly composed into one single barrier function \cite{molnar_ames_CSL2023}, for example, using their soft minimum or product as
\begin{align*}
\scalebox{0.96}{$h_{\smin} (\state) = -\tfrac{1}{\softness} \log \plist{\scalebox{0.9}{$\sum\limits_{i=1}^{\nbarrier} e^{-\softness h_i(\state)}$}\!\!}$}, 
\quad
\scalebox{0.96}{$h_{\mathrm{prod}}(\state) = \prod\limits_{i=1}^{\nbarrier} h_{i}(\state)$},
\end{align*}
whose gradients are given by 
\begin{align*}
\scalebox{0.82}{$\nabla h_{\mathrm{smin}}(\vect{x}) = \dfrac{\sum\limits_{i=1}^{m}e^{-\lambda h_i (\vect{x})} \nabla h_i (\vect{x})}{\sum\limits_{i=1}^{m}e^{-\lambda h_i (\vect{x})}}$},
\quad \!
\scalebox{0.82}{$\nabla h_{\mathrm{prod}}(\vect{x}) = h_{\mathrm{prod}}(\vect{x}) \! \sum\limits_{i=1}^{\nbarrier} \! \dfrac{\nabla h_i(\state)}{h_i(\state)}$}, 
\end{align*}
where $\softness> 0$ is a fixed smoothing parameter.
Although composing multiple control barrier functions into a single barrier function is preferred for explicitly determining safe control \cite{molnar_ames_CSL2023}, such functional compositions often lead to a loss of convexity, which limits the reliable extension of robot state safety to a local safe zone around it by a single safety constraint, as illustrated in \reffig{fig.softmin_control_barrier_corridor}, and introduce other known technical issues (e.g., the soft minimum can cause conservatism since it is a lower approximation of the true minimum with error, and gradient scaling and vanishing in barrier products can lead to numerical issues and conservatism).
\end{remark}

\begin{figure}
\begin{tabular}{@{\hspace{0.5mm}}c@{\hspace{1mm}}c@{\hspace{1mm}}c@{}}
\includegraphics[width=0.32\columnwidth, trim=10mm 10mm 10mm 10mm, clip]{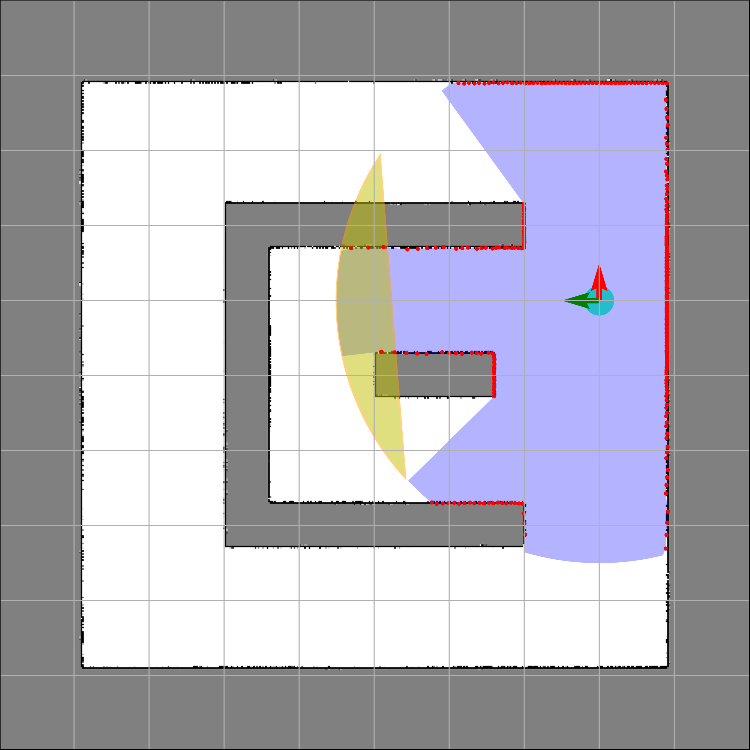}
&
\includegraphics[width=0.32\columnwidth, trim=10mm 10mm 10mm 10mm, clip]{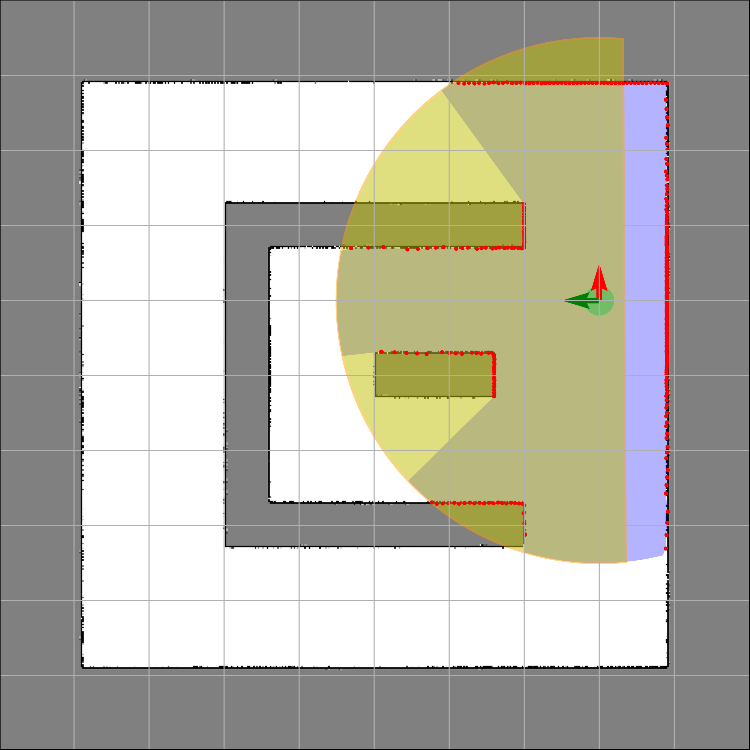}
&
\includegraphics[width=0.32\columnwidth, trim=10mm 10mm 10mm 10mm, clip]{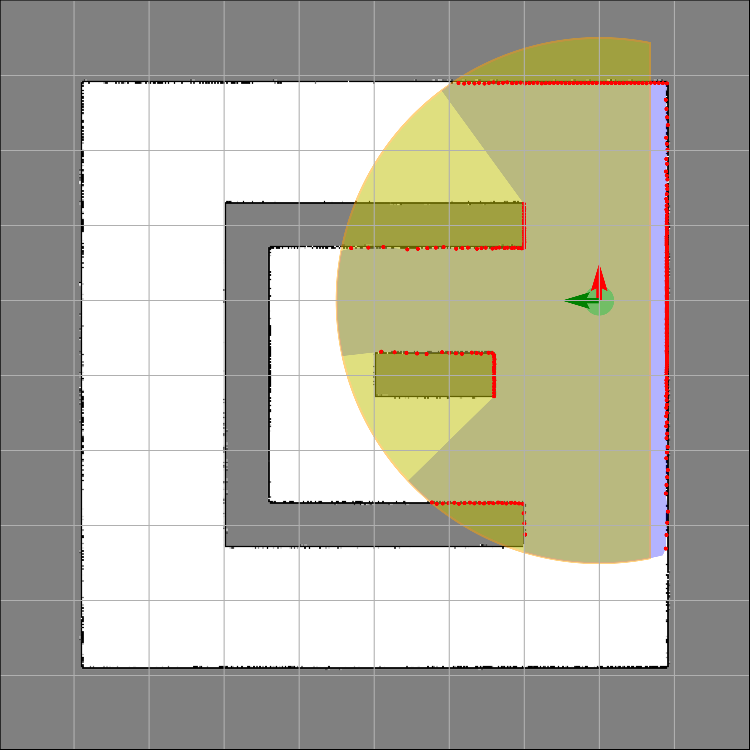} 
\end{tabular}
\vspace{-2mm}
\caption{Control barrier corridor (yellow) of the soft minimum of multiple barrier functions based on the collision distance of a circular robot body (cyan) to sensed obstacle points (red), where the proportional gain and linear barrier decay rates are the same, i.e., $\barrierrate=\ctrlgain$. 
Due to the lack of convexity and approximation error, the soft-min barrier composition is unable to identify a collision-free convex neighborhood around the robot as a control barrier corridor.
(left) $\softness = 2$: small softness results in large approximation error and inaccurate safety assessment of the robot position. 
(middle) $\softness = 10$: sufficiently large softness ensures robot safety but still inaccurately approximates the minimum distance to collision. 
(right) $\softness = 100$: very large softness accurately approximates the minimum distance to collision, but its gradient becomes less smooth.
}
\label{fig.softmin_control_barrier_corridor}
\vspace{-3mm}
\end{figure}

\subsection{Control Barrier Corridors of Kinematic Unicycle Systems}

Control-affine systems of the form $\dot{\vect{x}} = f(\vect{x}) + g(\vect{x}) \vect{u}$ allow for a convex quadratic optimization formulation of safety filtering of any given desired reference control as in \refeq{eq.safety_filtering}. 
However, the general nonlinear transformation of the control input (i.e., $g(\vect{x}) \vect{u}$) makes it difficult to fully understand, in geometric terms, the local influence of control on state safety, for example via control barrier corridors.
In this part, as an example of control-affine systems, we present a construction of control barrier corridors for kinematic unicycle systems under a standard geometric feedback control policy for goal-point navigation. 
In particular, we consider a kinematic unicycle robot, with position $\pos \in \R^2$ and forward orientation \mbox{$\orient \in [-\pi, \pi)$}, whose equations of motion are given by
\begin{align} \label{eq.unicycle_dynamics}
\dot{\pos} = \linvel \ovect{\orient}, \quad \text{ and } \quad  \dot{\orient} = \angvel,
\end{align}  
where $\linvel \in \R$ and $\angvel \in \R$ are the scalar control inputs specifying the linear and angular velocities of the unicycle, respectively. 
Note that the unicycle dynamics are control-affine but underactuated, subject to the nonholonomic motion constraint of no lateral (sideways) motion, i.e., \mbox{$\scalebox{0.9}{$\tr{\nvect{\orient}}$} \dot{\pos} = 0$}.  

Using a standard inner-outer loop control architecture\reffn{fn.unicycle_goal_ctrl_angular_linearization} \cite{astolfi_JDSMC1999, arslan_koditschek_IJRR2019}, the unicycle dynamics can be globally asymptotically brought to any goal position $\goalpos \in \R^2$ by minimizing the distance to the goal (outer loop) and the angular alignment error (inner loop), with the control inputs given by\reffn{fn.optimal_linear_velocity_unconstrained} 
\begin{subequations} \label{eq._unicycle_control}
\begin{align}
\linvel &= -\lingain \tr{\ovectsmall{\orient}} (\pos - \goalpos), 
\label{eq.unicycle_linear_control}
\\
\angvel &= \anggain \mathrm{atan2}\plist{\tr{\nvectsmall{\orient}\!}\!(\goalpos \!- \pos), \tr{\ovectsmall{\orient}\!}\!(\goalpos\! - \pos)\!} 
\label{eq.unicycle_angular_control}
\end{align} 
\end{subequations}
where $\mathrm{atan2}(y,x)$ is the two-argument arctangent function that returns the polar angle of the point $(x,y)$ in $[-\pi, \pi)$.  

\addtocounter{footnote}{1}
\footnotetext{\label{fn.unicycle_goal_ctrl_angular_linearization}
Alternatively, using angular feedback linearization \cite{tarshahani_isleyen_arslan_ECC2024}, one may also globally bring the unicycle dynamics in \refeq{eq.unicycle_dynamics} to any goal position $\goalpos \in \R^{2}$ using the following linear and angular velocity control inputs    
\begin{align*}
\linvel &= -\lingain \tr{\ovectsmall{\orient}} (\pos - \goalpos)
\\
\angvel &= \ctrlgain_{\omega} \varepsilon_{\goal}(\pos, \orient) + \linvel \frac{\sin \varepsilon_{\goal}(\pos, \orient)}{\norm{\pos - \goalpos}}
\end{align*}
where the angular alignment error is defined as  
\begin{align*}
\varepsilon_{\goal}(\pos, \orient) = \mathrm{atan2}\plist{\tr{\nvectsmall{\orient}}(\goalpos- \pos), \tr{\ovectsmall{\orient}}(\goalpos- \pos)}
\end{align*}
and it evolves under the unicycle dynamics in \refeq{eq.unicycle_dynamics} as
\begin{align*}
\frac{\diff}{\diff t}\varepsilon_{\goal}(\pos, \orient) = - \angvel + \linvel \frac{\sin \varepsilon_{\goal}(\pos, \orient)}{\norm{\pos - \goalpos}}.
\end{align*} 
Note that the same results and discussions on unicycle control barrier corridors and persistent unicycle goal control still hold. 
}

\addtocounter{footnote}{1}
\footnotetext{\label{fn.optimal_linear_velocity_unconstrained}
Note that one may consider the fully actuated proportional goal control $\ctrl_{d}= -\lingain(\pos - \goalpos)$ as a desired reference control and observe that the unicycle linear velocity control in \refeq{eq.unicycle_linear_control} is the optimal velocity input, being the closest to the reference control and equivalent to unconstrained safety filtering of the reference control, i.e.,
\begin{align*}
\linvel^* = -\lingain \tr{\ovectsmall{\orient}}(\pos - \goalpos) = \argmin_{\linvel \in \R} \left \| \ovectsmall{\orient}\linvel + \lingain(\pos - \goalpos) \right \|^2.
\end{align*}
}

A practical feature of circular unicycle robots (e.g., cleaning robots) is that they can freely perform turns in place with no collision with obstacles.
Accordingly, for a given set of control barrier functions $h_1(\pos), \ldots, h_{\nbarrier}(\pos)$ assessing the safety of the unicycle position $\pos$ irrespective of its orientation $\orient$ (e.g., power distance to collision), we determine the control barrier corridor of the closed-loop unicycle system as the set of goal positions that yields safe control as 
{\small
\begin{align} \label{eq.unicycle_control_barrier_corridor}
\!\!\barriercorridor_{\mathrm{uni}}(\pos, \orient) \!&=\!\! \scalebox{0.87}{$\bigcap\limits_{i=1}^{\nbarrier}\!\clist{\!\goalpos  \!\! \in \!\R^2 \Big | \!-\! \lingain \tr{\nabla h_i(\pos)\!} \!\ovectsmall{\orient} \!\tr{\ovectsmall{\orient}\!}\!\!(\pos \!-\! \goal) \! \geq \! -\barrierrate h_i (\pos) \!}$} \nonumber
\\
& \hspace{-16mm} = \!\!\scalebox{0.85}{$\bigcap\limits_{i=1}^{\nbarrier}\!\clist{\!\goalpos  \!\! \in \!\R^2 \Big | \!-\! \lingain \tr{\nabla h_i(\pos)\!} \! \plist{ \!\mat{I} \!-\! \nvectsmall{\orient} \!\tr{\nvectsmall{\orient}\!}\!}\!(\pos \!-\! \goal) \! \geq \! -\barrierrate h_i (\pos) \!}$} \!\!\!
\end{align}   
}%
where $\alpha > 0$ is a constant barrier decay rate.
Note that the unicycle control barrier corridor  $\barriercorridor_{\mathrm{uni}}(\pos,\orient)$ in \refeq{eq.unicycle_control_barrier_corridor} 
is larger than, and contains, the fully actuated control barrier corridor  $\barriercorridor_{\mathrm{full}}(\pos)$ in \refeq{eq.full_control_barrier_corridor} (for the same control gains $\lingain = \ctrlgain$), i.e., 
$\barriercorridor_{\mathrm{full}}(\pos) \subset \barriercorridor_{\mathrm{uni}}(\pos, \orient)$,  but it does not define a local safe neighborhood of a unicycle pose,  because the nonholonomic motion constraint inherently ensures lateral safety in the sideways direction. 
On the other hand, ensuring safety in all directions results in the full control barrier corridor, i.e.,
\begin{align}
\barriercorridor_{\mathrm{full}}(\pos) = \!\!\! \bigcap_{-\pi \leq \orient \leq \pi} \!\! \barriercorridor_{\mathrm{uni}}(\pos, \orient),
\end{align}   
which makes the full control barrier corridor a local safe zone candidate for goal selection in unicycle control.

Another known technical challenge of unicycle dynamics is that the nonholonomic motion constraint does not permit continuous smooth control toward a goal position $\goalpos$ at the critical barrier safety boundary, where $h_i(\goalpos) = 0$ for some $i$, and therefore requires an open safe neighborhood around the goal location to reach it with continuous and smooth control \cite{arslan_koditschek_IJRR2019, latha_arslan_ICRA2025}.
Hence, given any desired extra safety margin $\varepsilon \geq 0$, we define an $\varepsilon$-safer full control barrier corridor of a safe state $\state$, with $h_i(\state) \geq 0$ for all $i$, as
{
\begin{align}\label{eq.safer_full_control_barrier_corridor}
\!\!\barriercorridor_{\mathrm{full}, \varepsilon}(\state)&\!=\! \scalebox{0.95}{$ \bigcap\limits_{i=1}^{\nbarrier}\clist{\goal \!\!\in\! \R^{2} \Big |\! -\!\kappa \tr{\nabla h_{i,\varepsilon} (\state)\!}\!(\state \!-\! \goal) \!\geq\! - \barrierrate h_{i,\varepsilon}(\state)\!}$} \nonumber
\\
& \hspace{-13mm} =\! \scalebox{0.95}{$\bigcap\limits_{i=1}^{\nbarrier} \clist{\goal \in \R^2 \Big |\! - \!\kappa \tr{\nabla h_{i} (\state)\!}\!(\state \!-\! \goal) \!\geq\! - \barrierrate (h_{i}(\state) \!-\! \varepsilon)\!}$} \!\!\!
\end{align}
}%
based on the $\varepsilon$-safer control barrier functions defined to be
\begin{align}\label{eq.safer_control_barrier_functions}
h_{i,\varepsilon}(\state) = h_{i}(\state) - \varepsilon,
\end{align}
where $\nabla h_{i,\varepsilon}(\state) = \nabla h_{i}(\state)$.
For persistent selection  of an $\varepsilon$-safer goal $\goal$ in $\barriercorridor_{\mathrm{full}, \varepsilon}(\state)$, we also define the $\varepsilon$-safer goal-control barrier functions as 
\begin{align}
h_{i, \goal, \varepsilon}(\state) &= - \kappa \tr{\nabla h_{i,\varepsilon} (\state)}(\state - \goal) + \barrierrate h_{i,\varepsilon}(\state)
\\
& = -\kappa \tr{\nabla h_{i} (\state)}(\state - \goal) + \barrierrate (h_{i}(\state) \!-\! \varepsilon)
\end{align} 
whose gradient is given by
\begin{align}
\nabla h_{i, \goal, \varepsilon}(\state) 
&= - \kappa \nabla^2 h_{i}(\state)(\state -\goal) +(\alpha - \kappa) \nabla h_{i}(\state). \!\!\!
\end{align}
Accordingly,  using control barrier functions $h_i(\state)$ and the $\varepsilon$-safer goal-control barrier functions $h_{i, \goal, \varepsilon}(\state)$,  we construct a safety filter for the reference linear velocity in \refeq{eq.unicycle_linear_control} of the unicycle dynamics in \refeq{eq.unicycle_dynamics} to determine the safe linear velocity input that ensures both the safety of the robot and the persistent selection of an $\varepsilon$-safe goal $\goalpos \in \barriercorridor_{\mathrm{full},\varepsilon}(\state)$~as\reffn{fn.unicycle_velocity_objective}%
{\small
\begin{align}\label{eq.safe_unicycle_linear_control}
\begin{array}{rl}
\text{minimize} & \plist{\linvel + \ctrlgain \tr{\ovectsmall{\orient}}(\pos-\goalpos)}^2
\\
\text{subject to} & \tr{\nabla h_i(\pos)}  \ovectsmall{\orient} \linvel \geq - \barrierrate h_i (\pos) \quad \forall i 
\\
& \tr{\nabla h_{i,\goal, \varepsilon}(\pos)} \ovectsmall{\orient} \linvel \geq -\barrierrate h_{i, \goal, \varepsilon} (\pos) \quad \forall i 
\end{array}
\end{align}
}%
and  we determine the angular velocity input as in \refeq{eq.unicycle_angular_control}.
Note that the safe unicycle velocity control optimization in \refeq{eq.safe_unicycle_linear_control} is always feasible for any safe robot position $\pos$ and $\varepsilon$-safer goal $\goal$ in $\barriercorridor_{\mathrm{full},\varepsilon}(\state)$ (since $\linvel = 0$ is always safe for both the robot and persistent goal selection), and can be explicitly solved, because it is a one-dimensional convex quadratic optimization problem.\reffn{fn.explicit_scalar_quadratic_optimization}

\addtocounter{footnote}{1}
\footnotetext{\label{fn.unicycle_velocity_objective}%
Note that the objective of the unicycle safety filter in \refeq{eq.unicycle_linear_control} is proportional to the squared Euclidean distance between the unicycle velocity $\ovectsmall{\orient}\linvel$ and the desired fully actuated proportional control  $-\ctrlgain(\pos - \goal)$ as 
\begin{align*}
\left\|-\ctrlgain(\pos - \goalpos) - \ovectsmall{\orient} \linvel\right \|^2 &=  \norm{\ctrlgain(\pos - \goalpos)}^2 + \linvel ^2 + 2 \ctrlgain \tr{\ovectsmall{\orient}}(\pos-\goalpos)
\\
& \hspace{-15mm} = \plist{\linvel + \ctrlgain \tr{\ovectsmall{\orient}}(\pos-\goalpos)}^2    + \plist{\ctrlgain \tr{\nvectsmall{\orient}}(\pos-\goalpos)}^2.
\end{align*}
}

\addtocounter{footnote}{1}
\footnotetext{\label{fn.explicit_scalar_quadratic_optimization}%
Consider a scalar-variable quadratic optimization problem  of the form:
\begin{align*}
\begin{array}{rl}
\underset{x \in \R}{\mathrm{minimize}} & (x - x_d)^2
\\
\mathrm{subject to} & a_i x \leq b_i \quad \forall i = 1, \ldots, m 
\end{array}
\end{align*}
where the constraint set corresponds to the closed interval  $[x_{\min}, x_{\max}] $, 
\begin{align*}
x_{\min} = \max_{\substack{i=1, \ldots, m \\ a_i \leq 0}} \tfrac{b_i}{a_i}, 
\quad
x_{\max} = \min_{\substack{i=1, \ldots, m \\ a_i \geq 0}} \tfrac{b_i}{a_i}.  
\end{align*} 
Therefore, the optimal solution $x^*$ exists if $x_{\min} \leq x_{\max}$, and it is given by $x_d$ if $ x_d \in [x_{\min}, x_{\max}]$; otherwise the by the closest bound, i.e.,
\begin{align*}
x^* = \left \{ \begin{array}{cl}
x_d & \text{if } x_{\min} \leq x_d \leq x_{\max}],
\\
x_{\min} & \text{if } x_d < x_{\min}\leq x_{\max},
\\
x_{\max} & \text{if } x_d > x_{\max} \geq x_{\min},
\\
\text{infeasible} & \text{otherwise}.
\end{array}
\right.
\end{align*}
}

\begin{proposition}\label{prop.persistent_safe_unicycle_navigation}
\emph{(Persistent and Safe Unicycle Goal Control)}
Given any set of convex control barrier functions $h_{i}(\state)$, a matching pair of control gain $\ctrlgain$ and barrier rate $\barrierrate$ (i.e., $\barrierrate = \ctrlgain$), and any positive safety margin $\varepsilon > 0$, the safe unicycle control in \refeq{eq.safe_unicycle_linear_control} and \refeq{eq.unicycle_angular_control} asymptotically and safely brings any safe unicycle pose $(\pos, \orient)$ (with $h_i(\pos) \geq 0$ for all $i$) to any $\varepsilon$-safer goal $\goalpos \in \barriercorridor_{\mathrm{full},\varepsilon}(\pos)$, while ensuring that the goal $\goalpos$ remains in the $\varepsilon$-safer barrier corridor $\barriercorridor_{\mathrm{full},\varepsilon}(\pos(t))$ along the closed-loop unicycle motion trajectory $(\state(t), \orient(t))$ for all $t \geq 0$, i.e.,
\begin{align*}
&h_i(\pos(t)) \geq 0 \quad \forall t \geq 0, i = 1, \ldots, m 
\\
&\goalpos \in \barriercorridor_{\varepsilon}(\pos(t)) \quad \forall t \geq 0,
\\
&\lim_{t \rightarrow \infty} \pos(t) = \goalpos.
\end{align*}
\end{proposition}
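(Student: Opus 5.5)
The plan is to treat the two families of constraints in the safety filter \refeq{eq.safe_unicycle_linear_control} as control barrier function conditions along the closed-loop trajectory, prove invariance from feasibility, and then establish convergence with a Lyapunov-type argument on $\norm{\pos - \goalpos}$.

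\emph{Feasibility and regularity.} The filter is always feasible, since $\linvel = 0$ satisfies every constraint as long as $h_i(\pos) \geq 0$ and $h_{i,\goalpos,\varepsilon}(\pos) \geq 0$. By footnote~\ref{fn.explicit_scalar_quadratic_optimization}, the optimal $\linvel$ is the clipping of the reference $-\ctrlgain\tr{\ovectsmall{\orient}}(\pos-\goalpos)$ to an interval $[\linvel_{\min}, \linvel_{\max}] \ni 0$. I will argue that this clipping is locally Lipschitz, or at least gives well-defined Carathéodory solutions, so that trajectories exist. The angular law \refeq{eq.unicycle_angular_control} is continuous wherever $\pos \neq \goalpos$.

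\emph{Invariance.} The first constraint is exactly $\dot h_i(\pos) \geq -\barrierrate h_i(\pos)$. The comparison lemma then gives $h_i(\pos(t)) \geq e^{-\barrierrate t} h_i(\pos(0)) \geq 0$. The second constraint likewise gives $h_{i,\goalpos,\varepsilon}(\pos(t)) \geq 0$, which with $\barrierrate = \ctrlgain$ is precisely $\goalpos \in \barriercorridor_{\mathrm{full},\varepsilon}(\pos(t))$ for all $t$. As in \refprop{prop.geometry_meets_dynamics}, convexity then yields
\[
h_i(\goalpos) \geq h_i(\pos(t)) + \tr{\nabla h_i(\pos(t))}(\goalpos - \pos(t)) \geq \varepsilon > 0,
\]
so the goal has an open safe neighborhood.

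\emph{Convergence.} Take $V = \tfrac12\norm{\pos-\goalpos}^2$. Then $\dot V = \linvel\tr{\ovectsmall{\orient}}(\pos-\goalpos)$. Since the reference $\linvel_d$ has sign opposite to $\tr{\ovectsmall{\orient}}(\pos-\goalpos)$, and the clipping interval contains $0$, the clipped $\linvel$ has the same sign as $\linvel_d$ (or is zero). Hence $\dot V \leq 0$, and $\norm{\pos-\goalpos}$ is nonincreasing.

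The main obstacle is showing that the filter does not stall forever at some $\pos \neq \goalpos$, i.e., that the safety constraints do not force $\linvel \equiv 0$. The strict margin $\varepsilon > 0$ is what I would use here, via a LaSalle-type argument.

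First, the angular controller drives the heading error $\varepsilon_{\goal}$ toward $0$ whenever $\linvel$ is small. The inner-loop error obeys $\dot\varepsilon_{\goal} = -\anggain \varepsilon_{\goal} + \linvel \sin\varepsilon_{\goal}/\norm{\pos-\goalpos}$, so near a stall the robot turns in place toward the goal.

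Second, I check that once the robot is aligned, $\ovectsmall{\orient} \approx (\goalpos-\pos)/\norm{\goalpos-\pos}$, and motion along the segment toward the goal is admissible. The robot-safety constraint then reads $\tr{\nabla h_i}(\goalpos-\pos)\,\linvel/\norm{\goalpos-\pos} \geq -\barrierrate h_i$. With $\linvel = \ctrlgain\norm{\goalpos-\pos}$ this becomes $\ctrlgain\tr{\nabla h_i}(\goalpos-\pos) \geq -\barrierrate h_i$, which holds because $\goalpos \in \barriercorridor_{\mathrm{full},\varepsilon}$. For the goal constraint, the computation in \refprop{prop.persistent_safe_goal_control} gives $\dot h_{i,\goalpos,\varepsilon} = \ctrlgain^2\tr{(\pos-\goalpos)}\nabla^2 h_i(\pos)(\pos-\goalpos) \geq 0$. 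So under exact alignment the reference velocity is feasible, and by continuity a positive fraction of it remains feasible under near-alignment; the $\varepsilon$ slack gives strict inequalities that make this continuity argument work.

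Finally, the largest invariant set inside $\{\dot V = 0\}$ is then $\{\pos = \goalpos\}$, and LaSalle's principle gives $\pos(t) \to \goalpos$.
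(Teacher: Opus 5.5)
Your route matches the paper's. Feasibility comes from $\linvel=0$. The two constraint families give forward invariance of $h_i\ge 0$ and of $\goalpos\in\barriercorridor_{\mathrm{full},\varepsilon}(\pos(t))$. The clipped velocity has the sign of the reference, so $\norm{\pos-\goalpos}$ is nonincreasing. Stalling is excluded by in-place alignment. You go beyond the paper by checking that the reference velocity is admissible at exact alignment. However, your step from exact to near alignment is justified incorrectly.

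\textbf{The strictness does not come from $\varepsilon$ for the goal constraints.} You attribute the needed strict inequalities to the $\varepsilon$ slack.
\begin{itemize}
\item This is right for the robot-safety constraints. There $\goalpos\in\barriercorridor_{\mathrm{full},\varepsilon}(\pos)$ gives $\ctrlgain\tr{\nabla h_i(\pos)}(\goalpos-\pos)\ge -\barrierrate h_i(\pos)+\barrierrate\varepsilon$, which is what matters when some $h_i(\pos)=0$.
\item It is wrong for the goal-persistence constraints. There $\varepsilon$ is already absorbed into $h_{i,\goalpos,\varepsilon}$, which can be exactly $0$: the statement allows any goal in the closed set $\barriercorridor_{\mathrm{full},\varepsilon}(\pos)$, boundary included. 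With $z:=\pos-\goalpos$, the left-hand side at alignment is $\ctrlgain\linvel\,\tr{z}\nabla^2 h_i(\pos)z/\norm{z}$. This vanishes for merely convex barriers, e.g.\ for $p=1$ (the paper's experimental choice) whenever $z$ is parallel to $\pos-\vect{q}_i$, or for any affine $h_i$.
\end{itemize}
When both sides are $0$, continuity tells you nothing about the sign at nearby headings.

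The missing idea is the semidefinite dichotomy. For $M=\nabla^2 h_i(\pos)\succeq 0$, $\tr{z}Mz=0$ forces $Mz=0$.
\begin{itemize}
\item In that case $\nabla h_{i,\goalpos,\varepsilon}(\pos)=-\ctrlgain Mz=0$, so the constraint is vacuous for every heading.
\item Otherwise $\tr{z}Mz>0$, which supplies the strictness you need.
\end{itemize}
With this, at any frozen $\pos\neq\goalpos$, exponential alignment makes some $\linvel>0$ admissible in finite time. It is also selected, since $\linvel_d>0$ near alignment and the feasible interval then has a positive upper end.

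\textbf{LaSalle needs regularity you do not have.} LaSalle is a sensible way to rule out a slow asymptotic stall with $\norm{\pos-\goalpos}\to d^*>0$. The paper's direct argument (``$\linvel$ becomes positive after a finite time, so the distance strictly decreases'') does not spell this out either. But the filtered velocity is not locally Lipschitz:
\begin{itemize}
\item Wherever an offset $h_{i,\goalpos,\varepsilon}(\pos)$ or $h_i(\pos)$ is zero, the constraint reads $a\linvel\ge 0$. As $a$ changes sign with the heading, the filter output jumps between $0$ and $\linvel_d$.
\item These offsets may decay to zero along trajectories, so limit sets can lie on such states.
\item $\mathrm{atan2}$ adds a branch cut.
\end{itemize}
To close the argument you would need either an invariance principle for discontinuous right-hand sides, or a direct argument that $\linvel$ cannot tend to zero while $\norm{\pos-\goalpos}$ stays bounded away from zero.
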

\begin{proof}
If $\barriercorridor_{\mathrm{full}, \varepsilon}(\state) = \varnothing$, then the statement holds trivially. Otherwise, the result can be verified as follows.

For convex barrier functions $h_{i}(\state)$ and a matching control gain and barrier rate $\barrierrate = \ctrlgain$, the full control barrier corridor $\barriercorridor_{\mathrm{full}}(\state)$ in \refeq{eq.full_control_barrier_corridor} is a local safe convex neighborhood of a safe unicycle position $\state$ (see \refprop{prop.geometry_meets_dynamics}).  
Moreover, by definition \refeq{eq.safer_full_control_barrier_corridor}, the $\varepsilon$-safer full control barrier corridor $\barriercorridor_{\mathrm{full}, \varepsilon}(\state)$ is a convex subset of $\barriercorridor_{\mathrm{full}}(\state)$.
Therefore, the straight line between the robot position $\pos$ and the goal position $\goal$ is safe and contained in  $\barriercorridor_{\mathrm{full}}(\state)$, i.e.,
\begin{align}
\goalpos \in \barriercorridor_{\mathrm{full}, \varepsilon}(\state) \subset \barriercorridor_{\mathrm{full}}(\state), 
\\
[\goalpos, \pos]\subseteq \barriercorridor_{\mathrm{full}}(\state).
\end{align}

The optimal safe velocity control always exists (since $\linvel = 0$ is feasible) and, by design, it ensures the safety of the robot position $\pos$ and the persistent selection of the goal $\goal$.
Moreover, it satisfies $- \ctrlgain \tr{\ovectsmall{\orient}}(\pos - \goal) \linvel \geq 0$ and so the distance to the goal decreases over time as 
\begin{align}
\frac{\diff}{\diff t} \norm{\pos - \goalpos}^2  = \tr{\ovectsmall{\orient}}(\pos - \goal) \linvel \leq 0 
\end{align}
which can be zero only when $\linvel = 0$.
If $\linvel = 0$, the angular velocity control in \refeq{eq.unicycle_angular_control} exponentially aligns the robot’s orientation with the goal $\goal$.  
Since the goal remains $\varepsilon$-safer in $\barriercorridor_{\mathrm{full}, \varepsilon}(\pos) \subset \barriercorridor_{\mathrm{full}}(\pos)$ for all times, the linear velocity $\linvel$ is guaranteed to become positive after a finite time and so the robot continuously move toward the goal while strictly decreasing the distance to the goal. 
\end{proof}

\subsection{Control Barrier Corridors of Linear Control Systems}

As a third example of control barrier corridors, we consider a linear time-invariant system: 
\begin{equation}\label{eq.linear_control_system}
\begin{aligned}
\dot{\state} &= A \state + B \ctrl, \\
\vect{y} &= C \state,
\end{aligned}
\end{equation}
where $\state \in \R^{\nstate}$ is the state (e.g., position and velocity), $\ctrl \in \R^{\nctrl}$ is the control input (e.g., acceleration), and $\youtput \in \R^{\noutput}$ is the output (e.g., position). The matrices \mbox{$A \in \R^{\nstate \times \nstate}$}, \mbox{$B \in \R^{\nstate \times \nctrl}$}, and \mbox{$C \in \R^{\noutput \times \nstate}$} are, respectively, the state, input, and output matrices of appropriate dimensions.

We assume that the pair $(A,B)$ is stabilizable and that $K \in \R^{\nctrl \times \nstate}$ is a feedback matrix such that $A + BK$ is Hurwitz.
We also assume that  $\begin{bmatrix} A & B \\ C  & 0\end{bmatrix}$ is invertible,\footnote{For second-, or any higher-order linear systems, the matrix $\begin{bmatrix} A & B \\ C & 0 \end{bmatrix}$ is invertible, for example, for the second-order $\ddot{\state} = \ctrl$ and $y = \state$, we have $A = \begin{bmatrix} 0 & 1 \\ 0 & 0\end{bmatrix}$, $B = \begin{bmatrix}0 \\ 1\end{bmatrix}$, and $C = \begin{bmatrix}
1 & 0 \end{bmatrix}$.} so that the output can be related to the state and control via the output-to-state and output-to-control matrices, $X \in \R^{\nstate \times \noutput}$ and $U \in \R^{\nctrl \times \noutput}$, given by
\begin{equation}
\begin{bmatrix}
X \\
U
\end{bmatrix} = 
\begin{bmatrix} 
A & B \\ C & 0
\end{bmatrix}^{-1} 
\begin{bmatrix}
0 \\
I
\end{bmatrix}.
\end{equation}
This ensures that the output-to-state-to-output mapping is the identity, i.e., $C X = I$ and $\youtput = C X \youtput$ for all $\youtput$, and that output regulation vanishes at steady state, i.e., $A X + B U = 0$ and $A X \youtput + B U \youtput = 0$ for all $\youtput$.
Hence, given any desired output $\youtput^*$, the state-feedback output regulation control policy \cite{output_regulation_huang2004nonlinear}:
\begin{equation} \label{eq.linear_output_regulation_control}
\begin{aligned}
u = k_{\youtput^*}(\state) &= K (\state - X \youtput^*) + U \youtput^* \\
&= K \state + (U - K X) \youtput^*,
\end{aligned}
\end{equation}
renders $\state^* = X \youtput^*$ an exponentially stable equilibrium of the closed-loop system dynamics:
\begin{equation} \label{eq:closed_loop_lor}
\begin{aligned} 
\dot{\state} &= (A + B K)(\state - X \youtput^*), \\
\youtput &= C \state,
\end{aligned}
\end{equation}
and ensures that the output $\youtput$ converges exponentially to $\youtput^*$.

Given a set of control barrier functions $h_{1}(\state), \ldots, h_{\nbarrier}(\state)$, the control barrier corridor of the linear control system in \refeq{eq.linear_control_system}, under the output regulation control in \refeq{eq.linear_output_regulation_control}, contains all desired system outputs $\youtput^*$ that ensure safety feasibility at a safe system state $\state$ with $h_i(\state) \geq 0$ as
{\small
\begin{align}
\barriercorridor_{\mathrm{lor}}(\state) \!=\! \bigcap_{i=1}^{m}\clist{\youtput^* \big | \tr{\nabla h_{i}(\state)\!}\! (A \!+\! B K)(\state \!-\! X \youtput^*) \!\geq\! - \alpha (h_i(\state)\!)\!}
\end{align}  
}%
which is convex, being the intersection of halfspaces.
Note that, in general, the control barrier corridor $\barriercorridor_{\mathrm{lor}}(\state)$ of the linear output regulator system can be empty, especially for high-order systems, and that the current system output $\youtput = C \state$ is not necessarily in the barrier corridor $\barriercorridor_{\mathrm{lor}}(\state)$.
Hence, persistent desired output selection and regulation (as in \refprop{prop.persistent_safe_goal_control}) is not guaranteed in general.
A promising open research question is how to design a control policy and determine requirements on the control barrier functions for linear output regulation systems such that:
\begin{itemize}
\item (Recursive Feasibility) the control barrier corridor is nonempty, i.e., $\barriercorridor(\state) \neq \varnothing$ for any safe state with $h_{i}(\state) \geq 0$;

\item (Local Neighborhood) the control barrier corridor contains the current output, i.e., $\youtput = C \state \in \barriercorridor(\state)$ for any safe state with $h_{i}(\state) \geq 0$;

\item (Persistent Output Regulation) the control barrier corridor ensures persistent and safe output regulation, i.e., $h_i(\state(t)) \geq 0$ and $\youtput^* \in \barriercorridor(\state(t))$ for all $t \geq 0$, given that $h_i(\state(0)) \geq 0$ and $\youtput^* \in \barriercorridor(\state(0))$.
\end{itemize}
For example, if both $B$ and $C$ are invertible, all these desired features can be satisfied via full state-feedback linearization by embedding the fully-actuated reference system dynamics $\dot{\state} = -\kappa (\state - \state^*)$, where $\state^* = C^{-1}y^*$, by setting $X = C^{-1}$, $U = - B^{-1}A C^{-1}$, and $K = -B^{-1}(\kappa I + A)$.

In the general case, similar to the goal-control barrier functions in \refeq{eq.goal_control_barrier_function}, to ensure the inclusion of the current output in the control barrier corridor and persistent desired output regulation, one may consider the current-output and goal-output control barrier functions, respectively, defined as
\begin{align}
h_{i,\youtput}(\state) = \tr{\nabla h_{i}(\state)\!} (A \!+\! B K)(I \!-\!XC)\state + \alpha (h_i(\state)\!), \!\!\!
\\
h_{i,\youtput^*}(\state) = \tr{\nabla h_{i}(\state)\!} (A \!+\! B K)(\state\!-\!X\youtput^*) + \alpha (h_i(\state)\!), \!\!\!
\end{align}
where $h_{i,\youtput}(\state) \geq 0$ implies $\youtput = C \state \in \barriercorridor_{\mathrm{lor}}(\state)$,  and  $h_{i,\youtput^*}(\state) \geq 0$ implies \mbox{$\youtput^* \in \barriercorridor_{\mathrm{lor}}(\state)$}.
Thus, the safety feasibility of output-control barrier functions can be used as constraints for recursive feasibility and persistent regulation.

Finally, to provide a sufficient condition for persistent output regulation, we consider the following trust region:
\begin{equation}\label{eq:trust_region_lor}
\mathrm{TR}(\state) \!=\! \scalebox{0.88}{$\bigcap\limits_{i=1}^{m}\!\clist{\youtput^* \Big | \|\nabla h_{i}(\state)\| \|A\!+\!BK\| \|\state \!-\! X \youtput^*\| \leq \alpha (h_i(\state)\!)\!}$}\!\!\!
\end{equation}
which satisfies $\mathrm{TR}(\state) \subseteq \barriercorridor_{\mathrm{lor}}(\state)$, by the Cauchy–Schwarz inequality.
The trust region $\mathrm{TR}(\state)$ allow for determining a persistent and safe output regulation set and for identifying the related requirements on control barrier functions, barrier decay rate, and control convergence rate.

\begin{proposition}\label{prop.safe_persistent_output_regulation}
\emph{(Safe and Persistent Output Regulation)}
Consider the closed-loop system in \eqref{eq:closed_loop_lor} and the set $\mathrm{TR}(\state) \subseteq \barriercorridor_{\mathrm{lor}}(\state)$ in \eqref{eq:trust_region_lor}. If the control barrier functions $h_i(\state)$ are convex and the barrier decay function $\alpha(.)$ satisfies
\begin{equation} \label{eq:alpha_lor}
    \alpha(h_i(\state)) \leq \frac{1}{2}\|A+BK\| h_i(\state),
\end{equation}
then any desired output $\youtput^* \in \mathrm{TR}(\state)$ is safely reachable, i.e., $h_i(\state(t)) \geq 0$ for all $i$ along the closed-loop trajectory $\state(t)$ of \eqref{eq:closed_loop_lor} for all $t \geq 0$.
\end{proposition}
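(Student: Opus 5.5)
The plan is to avoid any invariance or Lyapunov argument and instead combine the explicit closed-loop solution with the first-order convexity inequality, as in the proofs of \refprop{prop.geometry_meets_dynamics} and \refprop{prop.persistent_safe_goal_control}. Throughout, fix a safe initial state $\state_0=\state(0)$ with $h_i(\state_0)\geq 0$ and a desired output $\youtput^*\in\mathrm{TR}(\state_0)$. Write $M = A+BK$ and $\state^* = X\youtput^*$.

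\textbf{Step 1 (trajectory).} The closed-loop system \eqref{eq:closed_loop_lor} is linear in $\state-\state^*$, so
\begin{align*}
\state(t)-\state^* = e^{Mt}(\state_0-\state^*),
\qquad
\state(t)-\state_0 = (e^{Mt}-I)(\state_0-\state^*).
\end{align*}

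\textbf{Step 2 (convexity and Cauchy--Schwarz).} Apply the first-order convexity condition at $\state_0$ and then Cauchy--Schwarz:
\begin{align*}
h_i(\state(t)) &\geq h_i(\state_0) + \tr{\nabla h_i(\state_0)}(e^{Mt}-I)(\state_0-\state^*)
\\
&\geq h_i(\state_0) - \norm{\nabla h_i(\state_0)}\,\norm{e^{Mt}-I}\,\norm{\state_0-\state^*}.
\end{align*}

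\textbf{Step 3 (using the trust region and \eqref{eq:alpha_lor}).} Membership $\youtput^*\in\mathrm{TR}(\state_0)$ together with \eqref{eq:alpha_lor} gives
\begin{align*}
\norm{\nabla h_i(\state_0)}\,\norm{M}\,\norm{\state_0-\state^*} \leq \alpha(h_i(\state_0)) \leq \tfrac{1}{2}\norm{M}\,h_i(\state_0).
\end{align*}
Dividing by $\norm{M}>0$ (nonzero because $M$ is Hurwitz) yields $\norm{\nabla h_i(\state_0)}\,\norm{\state_0-\state^*}\leq \tfrac12 h_i(\state_0)$. Substituting into Step 2 gives
\begin{align*}
h_i(\state(t)) \geq h_i(\state_0)\left(1-\tfrac12\norm{e^{Mt}-I}\right).
\end{align*}
Safety for all $t\geq 0$ therefore reduces to the uniform bound $\norm{e^{Mt}-I}\leq 2$. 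This would follow from $\norm{e^{Mt}}\leq 1$ for all $t\ge 0$ via the triangle inequality.

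\textbf{Step 4 (main obstacle).} Justifying $\norm{e^{Mt}}\leq 1$ is the hard part, because a Hurwitz $M$ alone only gives $\norm{e^{Mt}}\leq c\,e^{-\lambda t}$, and $c$ may exceed $1$ due to transient overshoot. My first attempt is to read $\norm{\cdot}$ as an induced norm in which the closed loop is contractive. For example, take the Euclidean norm when $M+\tr{M}\preceq 0$; more generally, use a weighted norm $\norm{\cdot}_P$ with $\tr{M}P+PM\preceq 0$, which exists for any Hurwitz $M$. Then $\frac{\diff}{\diff t}\norm{\state(t)-\state^*}_P^2\leq 0$, hence $\norm{e^{Mt}}_P\leq 1$, and Steps 2--3 carry over verbatim with $\norm{\nabla h_i}$ replaced by the dual norm.

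If the Euclidean reading must be kept, the fallback is to absorb the overshoot constant $c$ into the factor $\tfrac12$ in \eqref{eq:alpha_lor}. That is, I would show safety under $\alpha(h)\leq \tfrac{1}{1+c}\norm{M}h$, which reduces to the stated condition exactly when $c=1$. I would then state explicitly that the proposition uses this contractivity.

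\textbf{Step 5 (convergence).} Finally, since $M$ is Hurwitz, $\state(t)\to\state^*$, so $\youtput(t)=C\state(t)\to CX\youtput^*=\youtput^*$. This completes safe reachability of $\youtput^*$.
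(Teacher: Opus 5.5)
Your Steps 1--3 are the paper's proof in slightly different packaging. With $g_i=\nabla h_i(\state)$ and $\vect{z}=X\youtput^*-\state$, the paper writes $\state(t)-\state=(I-e^{Lt})\vect{z}$ and adds and subtracts $\norm{L}\tr{g_i}e^{Lt}\vect{z}$. It then bounds both $-\tr{g_i}\vect{z}$ and $\tr{g_i}e^{Lt}\vect{z}$ by $\norm{g_i}\norm{\vect{z}}\leq\alpha(h_i(\state))/\norm{L}$, lets the factor $\tfrac12$ in the decay condition absorb the resulting factor $2$, and closes with first-order convexity at the initial state. This is exactly your bound $\norm{I-e^{Mt}}\leq 2$. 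The one real difference is your Step 4. The paper simply asserts $\norm{e^{Lt}}\leq 1$ for all $t\geq 0$ without justification, and your objection is correct: a Hurwitz $A+BK$ does not give this in the Euclidean norm.

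In fact the gap is in the statement, not in your reasoning. Take the double integrator ($\youtput=\state_1$, so $X=\tr{[1\ \ 0]}$) with $K=[-100\ \ {-20}]$. Then $L=A+BK$ has the double eigenvalue $-10$, $e^{Lt}=e^{-10t}\bigl(I+t(L+10I)\bigr)$, and $\norm{I-e^{0.1L}}\approx 3.82>2$. Choose a linear, hence convex, barrier $h(\state)=\tr{a}\state+b$; it is a valid CBF here since $\tr{a}B\neq 0$. Let $a$ be antiparallel to the top left singular vector of $I-e^{0.1L}$ and $X\youtput^*-\state_0$ parallel to its top right singular vector, scaled so that $\norm{a}\norm{X\youtput^*-\state_0}=\tfrac12 h(\state_0)>0$, and take $\alpha(s)=\tfrac12\norm{L}s$. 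Then $\youtput^*\in\mathrm{TR}(\state_0)$ and the decay condition holds with equality, yet $h(\state(0.1))=h(\state_0)\bigl(1-\tfrac12\norm{I-e^{0.1L}}\bigr)<0$.

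So an extra hypothesis is unavoidable, and your repairs are the right ones. You can assume contractivity $L+\tr{L}\preceq 0$. Alternatively, work consistently in a $P$-norm with $\tr{L}P+PL\preceq 0$, using the dual norm on $\nabla h_i$ in both $\mathrm{TR}$ and the decay condition; $\norm{L}$ cancels in your Step 3 anyway. Or replace $\tfrac12$ by $\tfrac{1}{1+c}$ with $c=\sup_{t\geq 0}\norm{e^{Lt}}$.
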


\begin{proof}
Let $L := A+BK$, $\vect{z} := X\youtput^*\!-\state$, and $g_i := \nabla h_i(\state)$ to simplify the notation. Since $\youtput^* \!\in \mathrm{TR}(\state)$ by assumption and by the Cauchy–Schwarz inequality
\begin{equation} \label{eq:TR-CS}
\begin{aligned}
    0 &\leq \alpha(h_i(\state)) - \|L\| \|g_i\| \|\vect{z}\|\\
      &\leq \alpha(h_i(\state)) + \|L\| g_i^\top \vect{z}.
\end{aligned}
\end{equation}
The closed-loop trajectory of \eqref{eq:closed_loop_lor} with initial condition $\state$ at time $t = 0$ is:
\begin{equation}
    \state(t) = e^{Lt}\state + (I-e^{Lt})X\youtput^*, \quad t\geq 0.
\end{equation}
To relate \eqref{eq:TR-CS} to the closed-loop trajectory, we add and subtract $\|L\| g_i^\top e^{Lt}\vect{z}$, which satisfies $\|L\| g_i^\top e^{Lt}\vect{z} \leq \|L\| \|g_i\| \|e^{Lt}\| \|\vect{z}\| \leq \|L\|\|g_i\| \|\vect{z}\| \leq \alpha(h_i(\state))$ by the Cauchy–Schwarz inequality, $\|e^{Lt}\| \leq 1$ for all $t$, and \eqref{eq:trust_region_lor}. Thus, we obtain:
\begin{align*}
0 &\leq \alpha(h_i(\state)) + \|L\| g_i^\top \vect{z} - \|L\| g_i^\top e^{Lt}\vect{z} + \|L\| g_i^\top e^{Lt}\vect{z} \\
 &\leq \alpha(h_i(\state)) + \|L\| g_i^\top(I-e^{Lt})\vect{z} + \|L\| \|g_i\| \|\vect{z}\|\\
 &\leq 2 \alpha(h_i(\state)) + \|L\| g_i^\top (\state(t)-\state)\\
 &\leq \|L\|( h_i(\state) + g_i^\top(\state(t)-\state) )\\
 &\leq \|L\| h_i(\state(t)),
\end{align*}
where we use the choice of $\alpha$ in \eqref{eq:alpha_lor} in the second-to-last step and the convexity of $h_i$ in the last step. The last inequality implies $h_i(\state(t)) \geq 0$ for all $t \geq 0$.
\end{proof}

\begin{figure}[t]
\centering
\begin{tabular}{@{}c@{\hspace{1mm}}c@{\hspace{1mm}}c@{}}
\includegraphics[width=0.32\columnwidth, trim=10mm 10mm 10mm 10mm, clip]{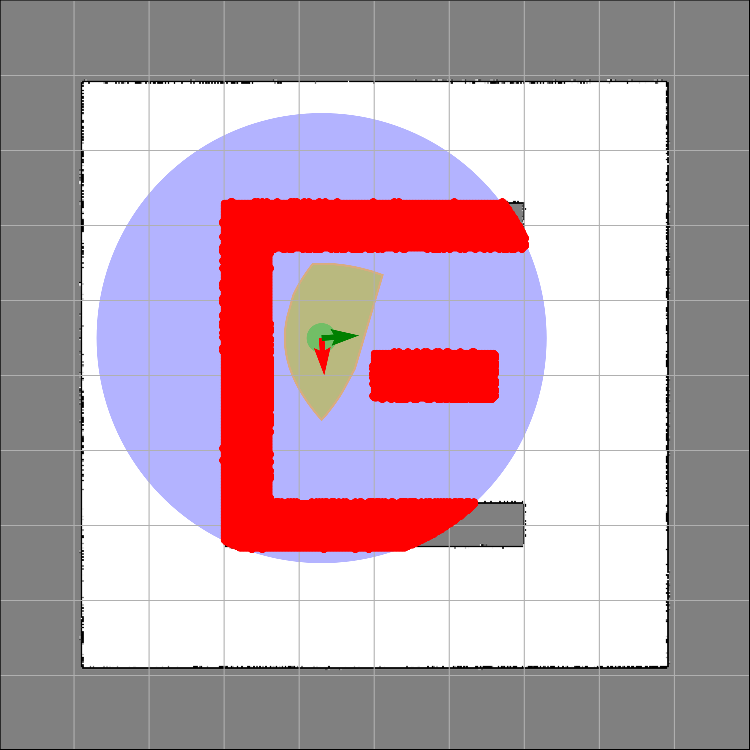} 
&
\includegraphics[width=0.32\columnwidth, trim=10mm 10mm 10mm 10mm, clip]{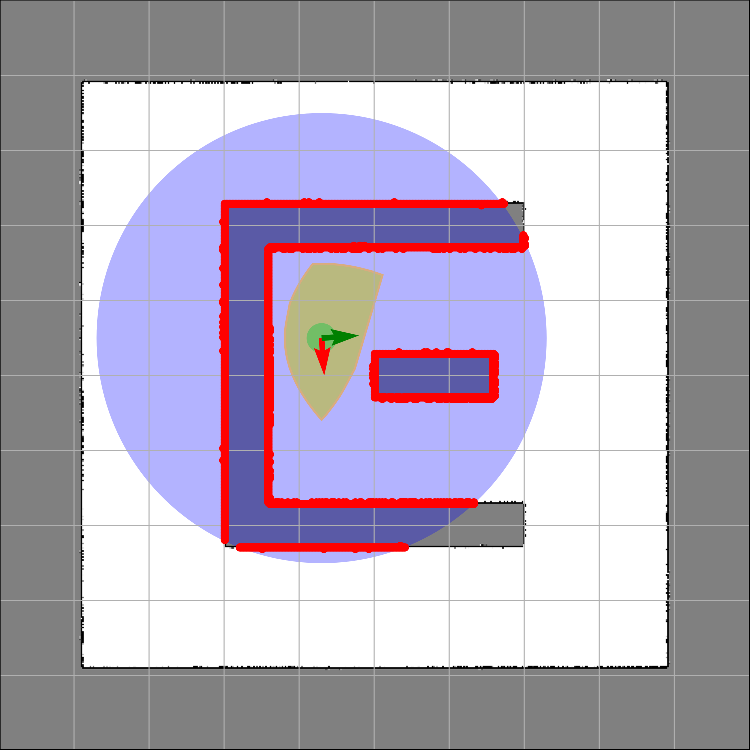} 
&
\includegraphics[width=0.32\columnwidth, trim=10mm 10mm 10mm 10mm, clip]{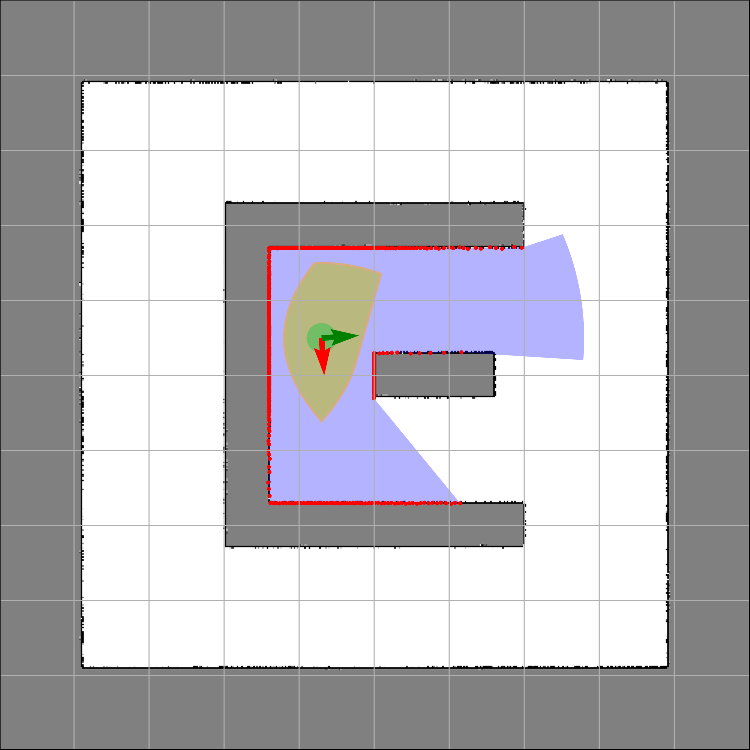} 
\end{tabular}
\vspace{-2mm}
\caption{Sensor-based control barrier corridors (yellow), constructed using (left) sensed unfree (red), i.e., occupied (black) and unknown(gray) locations in an occupancy grid map, (middle) the boundary (red) of the unfree space in the occupancy map, and (right) the sensed visible obstacle points from a laser scanner within a certain range around the robot, almost always result in the same barrier corridor, except for occlusions. Hence, visible obstacle points constitute the essential element for barrier safety and safe control.}
\label{fig.sensor_based_control_barrier_corridors}
\vspace{-3mm}
\end{figure}

\section{Application of Control Barrier Corridors: Safe and Persistent Path Following}
\label{sec.safe_path_following}

In this section, we present an application of control barrier corridors for local goal selection to ensure verifiable safe and persistent path following, and we demonstrate sensor-based path-following control for safe mobile robot navigation and exploration in unknown environments.

\subsection{Safe Path Following with Control Barrier Corridors}

A practical use of control barrier corridors is local goal selection for safe and persistent path following around obstacles, ensuring collision avoidance while maintaining continuous progress along the path. 
More precisely, given a reference (e.g., navigation) path $\refpath(s): [0,1] \rightarrow \R^{n}$, starting at $\refpath(0)$ and ending at $\refpath(1)$, we consider a local path goal selection strategy over a control barrier corridor $\barriercorridor(\state)$ of a closed-loop robotic system around a safe robot state $\state$ (with $h_{i}(\state) \geq i$ for all i) as selecting the farthest path point $\refpath(s)$ contained within the barrier corridor $\barriercorridor(\state)$, defined as
\begin{align}\label{eq.path_goal_selection}
\refpathgoal(\state):= \refpath\plist{\raisebox{3mm}{$\argmax\limits_{\substack{s \in [0,1]\\ \refpath(s) \in \barriercorridor(\state)}} s$}}.
\end{align}
For example, the control barrier corridor of a mobile robot can be constructed using the power distance to obstacle-occupied cells in an occupancy grid map of the environment, or to sensed visible obstacles within a certain sensing range, as illustrated in \reffig{fig.sensor_based_control_barrier_corridors}.
The local safety of control barrier corridors plays a critical role in ensuring the safe reachability of the selected path goal, as demonstrated in \reffig{fig.local_goal_selection_over_control_barrier_corridors}.
A matching pair of safety barrier decay rate and proportional control gain ($\barrierrate = \ctrlgain$), together with the convexity of control barrier functions (see \refprop{prop.geometry_meets_dynamics}), enables safe path goal selection within a safe control barrier corridor, thereby ensuring verifiable and reliable path following around obstacles.

\begin{figure}[t]
\centering
\begin{tabular}{@{}c@{\hspace{1mm}}c@{\hspace{1mm}}c@{}}
\includegraphics[width=0.32\columnwidth, trim=10mm 10mm 10mm 10mm, clip]{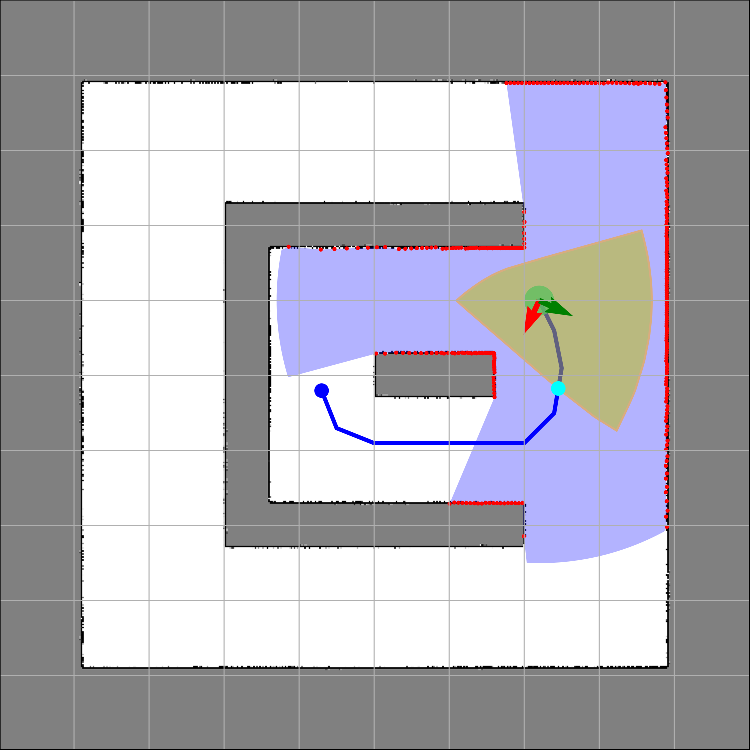} 
&
\includegraphics[width=0.32\columnwidth, trim=10mm 10mm 10mm 10mm, clip]{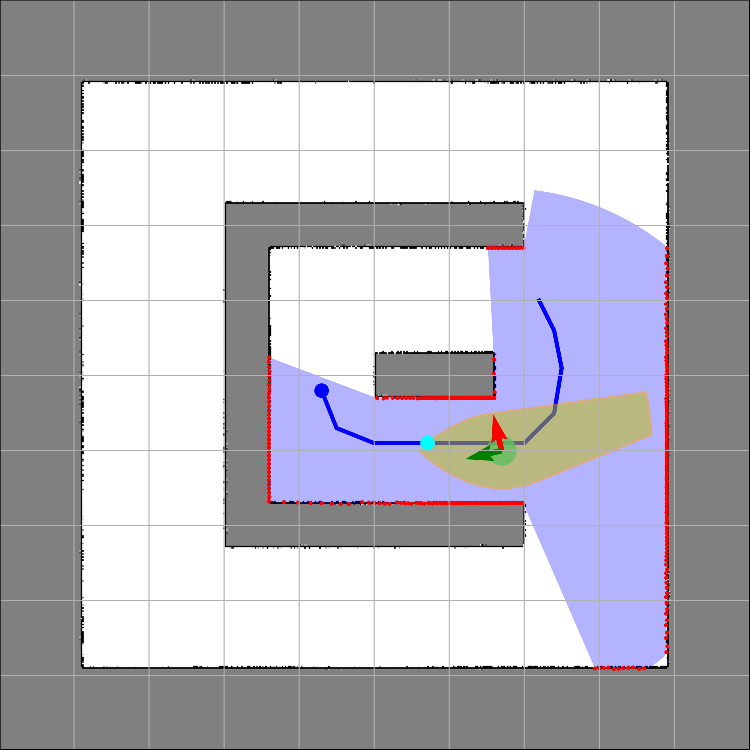} 
&
\includegraphics[width=0.32\columnwidth, trim=10mm 10mm 10mm 10mm, clip]{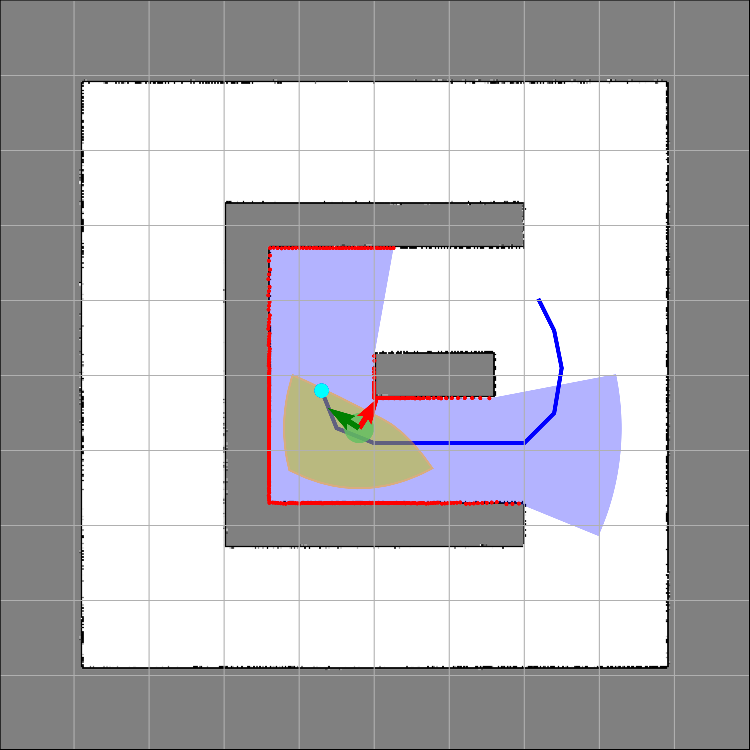} 
\\
\includegraphics[width=0.32\columnwidth, trim=10mm 10mm 10mm 10mm, clip]{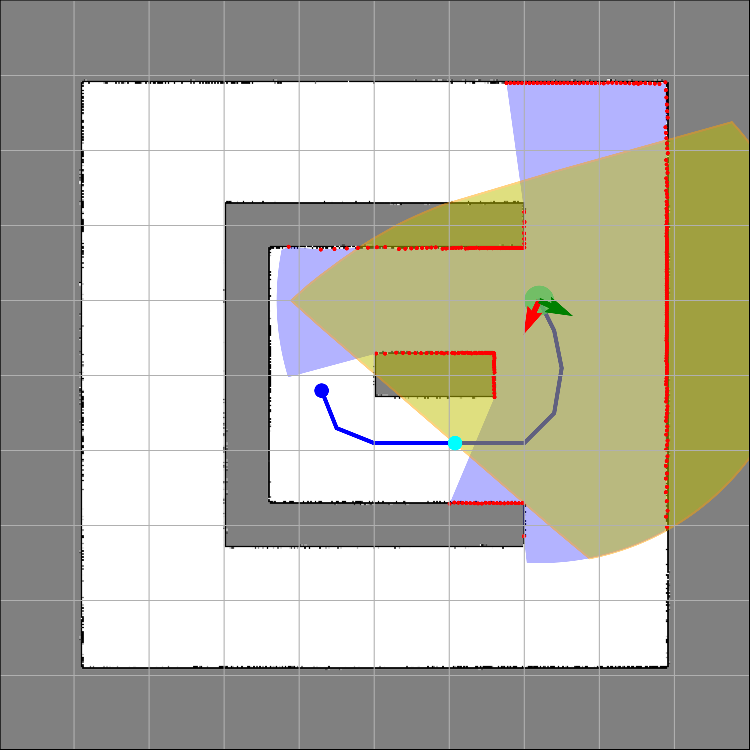} 
&
\includegraphics[width=0.32\columnwidth, trim=10mm 10mm 10mm 10mm, clip]{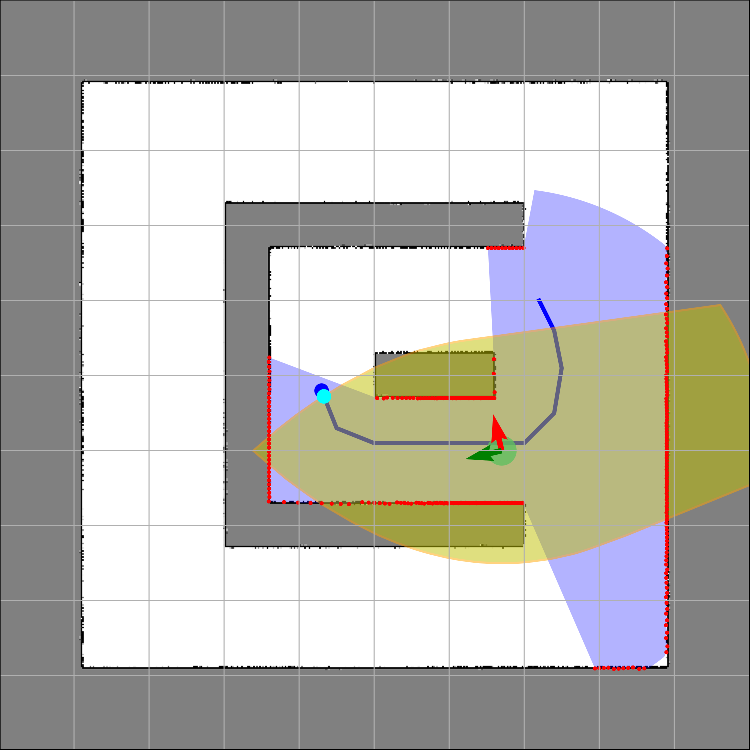} 
&
\includegraphics[width=0.32\columnwidth, trim=10mm 10mm 10mm 10mm, clip]{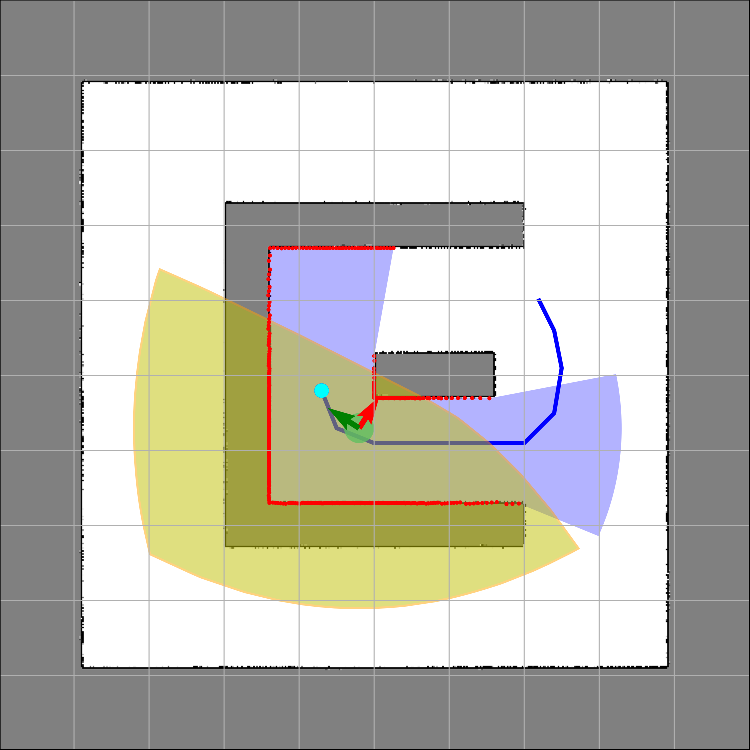} 
\end{tabular}
\vspace{-2mm}
\caption{Local goal selection (cyan) for following a path (blue) using control barrier corridors (yellow).  
Matching the barrier decay rate and the control convergence rate ($\barrierrate = \ctrlgain$, top) enables safe goal selection and reliable path-following control, as opposed to using a higher barrier decay rate than the control convergence rate ($\barrierrate = 3\ctrlgain$, bottom).}
\label{fig.local_goal_selection_over_control_barrier_corridors}
\vspace{-3mm}
\end{figure}

\begin{figure*}
\begin{tabular}{@{}c@{}}
\includegraphics[width=0.14\textwidth, trim=10mm 10mm 10mm 10mm, clip]{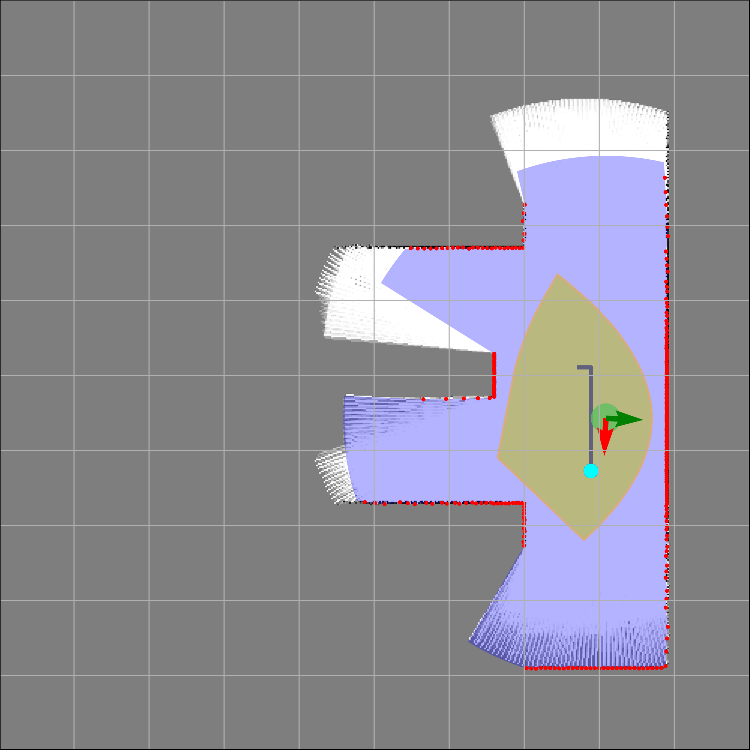} 
\hspace{-1.8mm}
\includegraphics[width=0.14\textwidth, trim=10mm 10mm 10mm 10mm, clip]{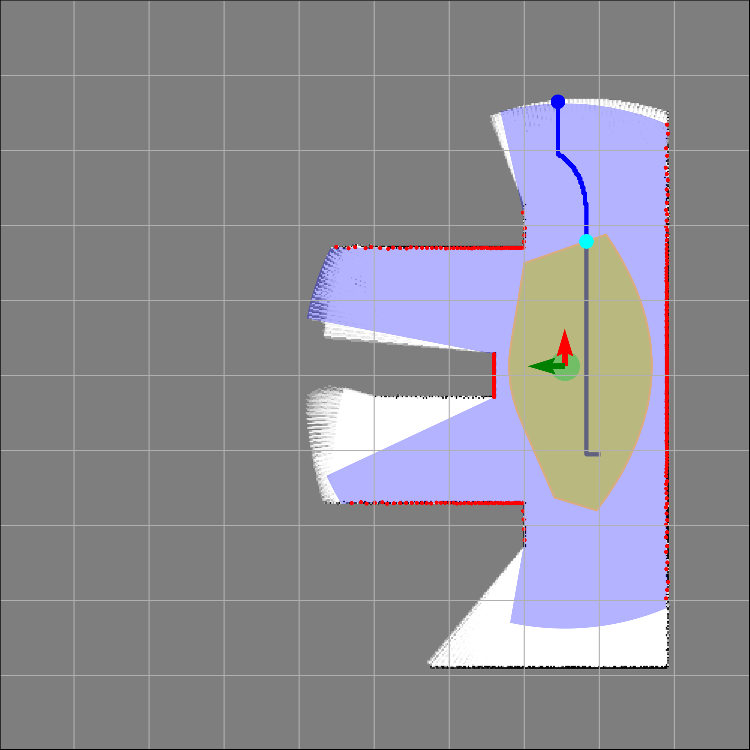} 
\hspace{-1.8mm}
\includegraphics[width=0.14\textwidth, trim=10mm 10mm 10mm 10mm, clip]{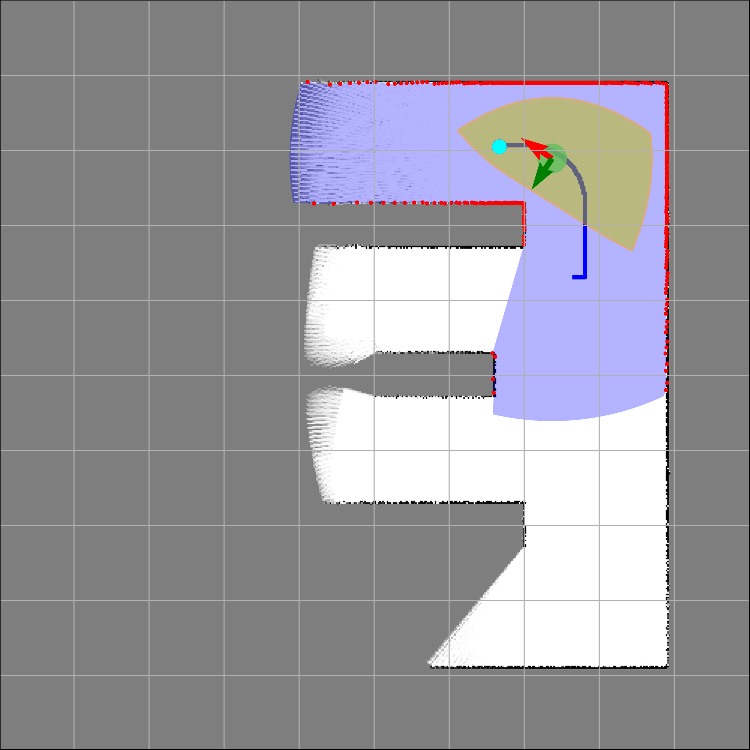} 
\hspace{-1.8mm}
\includegraphics[width=0.14\textwidth, trim=10mm 10mm 10mm 10mm, clip]{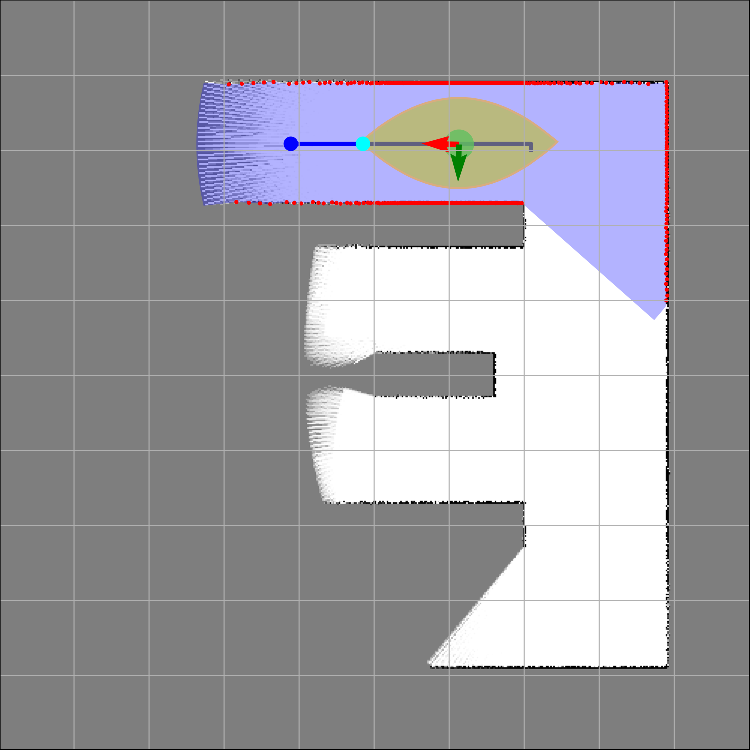} 
\hspace{-1.8mm}
\includegraphics[width=0.14\textwidth, trim=10mm 10mm 10mm 10mm, clip]{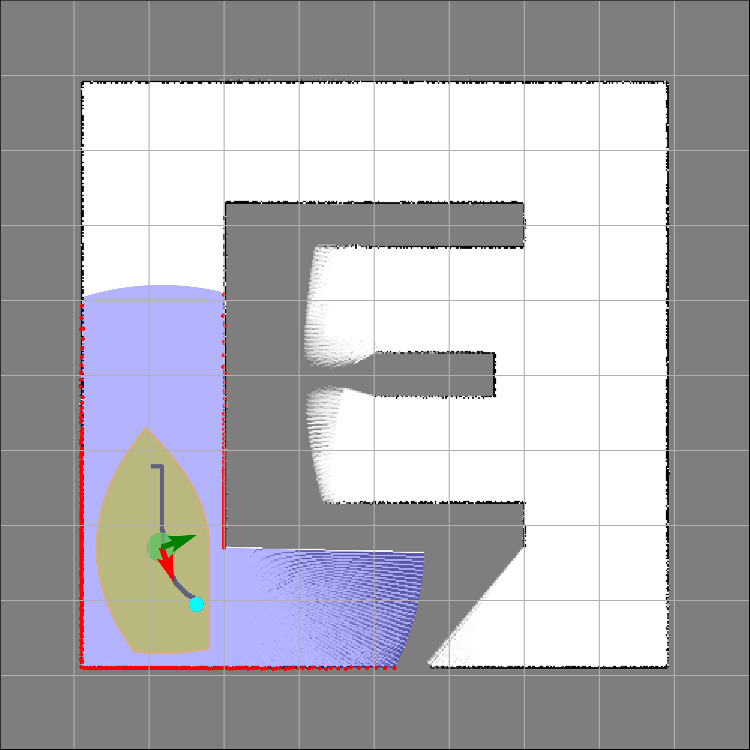} 
\hspace{-1.8mm}
\includegraphics[width=0.14\textwidth, trim=10mm 10mm 10mm 10mm, clip]{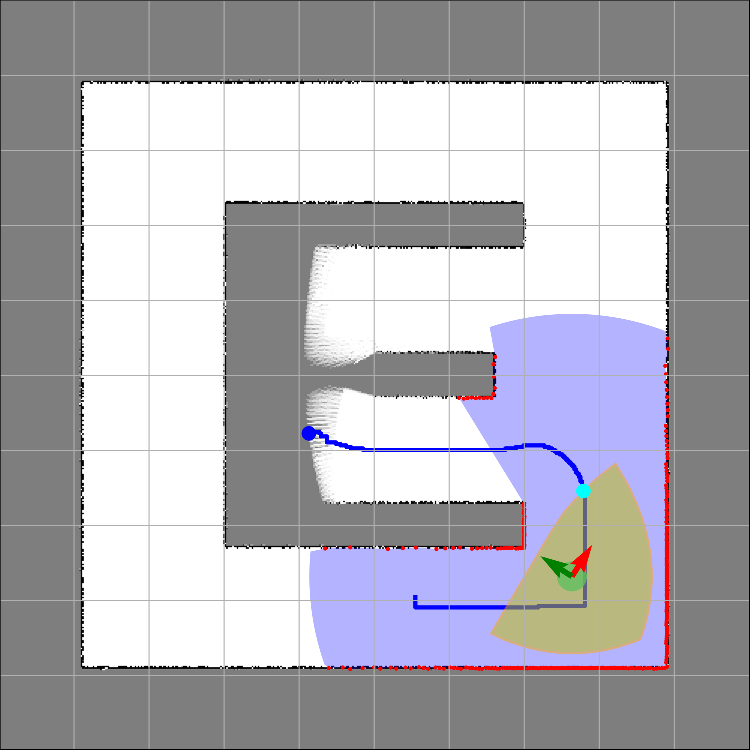} 
\hspace{-1.8mm}
\includegraphics[width=0.14\textwidth, trim=10mm 10mm 10mm 10mm, clip]{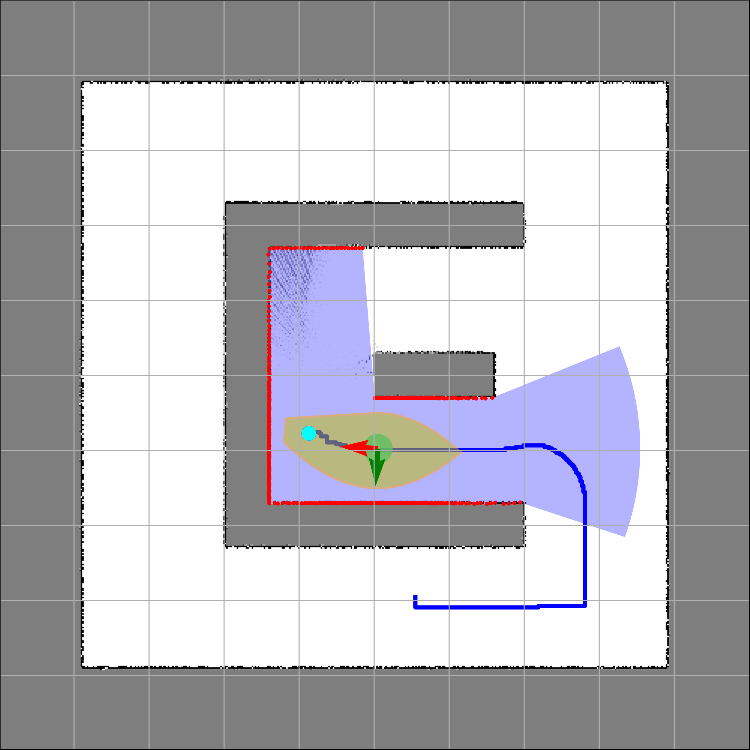} 
\end{tabular}
\vspace{-3mm}
\caption{Safe and persistent path following for frontier-based autonomous exploration of an unknown static environment, using a reference exploration path (blue line) toward a reachable frontier point (blue dot), with safely reachable path goal selection (cyan) over control barrier corridors (yellow) of a unicycle mobile robot (cyan + red arrow) constructed from sensed obstacle points (red) via an onboard laser scanner sensor.}
\label{fig.safe_exploration_control_barrier_corridor_unicycle}
\vspace{-3mm}
\end{figure*}

\begin{proposition} \label{prop.safe_persistent_path_following}
\emph{(Safe and Persistent Path Following with Control Barrier Corridors)}
Given a set of convex control barrier functions $h_1(\state), \ldots, h_{\nbarrier}(\state)$ for the fully actuated system $\dot{\state} = \ctrl$ and a strictly safe reference path $\refpath:[0,1] \rightarrow \R^{n}$ with $h_i(\refpath(s)) > 0$ for all $s \in [0,1]$ and $i=1, \ldots, \nbarrier$,  
for any matching barrier rate and control gain, $\barrierrate = \ctrlgain$, starting from any initial state $\state_0$ with $\refpath([0,1]) \cap \barriercorridor_{\mathrm{full}}(\state_0) \neq  \varnothing$, the continuous motion of the  fully actuated robot under proportional control 
\begin{align}
\dot{\state} = -\kappa \big(\state - \refpathgoal(\state)\big)
\end{align}  
towards the farthest path point $\refpathgoal(\state)$ of the reference path within the barrier corridor $\barriercorridor_{\mathrm{full}}(\state)$ asymptotically brings the robot to the end of the path, while ensuring robot safety along the motion trajectory $\state(t)$, the existence of a safely reachable path goal all times, and persistent progress along the reference path, 
i.e.,
\begin{align}
h(\state(t)) \geq 0 \quad &\forall \, t \geq 0, 
\\
\refpath([0,1]) \cap \barriercorridor(\state(t)) \neq \varnothing  \quad &\forall \, t \geq 0,
\\
\argmax_{\substack{s \in [0,1] \\ \refpath(s) \in \barriercorridor(\state(t_1))}}  \hspace{-2mm}s
  \, \leq  \hspace{-2mm}
  \argmax_{\substack{s \in [0,1] \\ \refpath(s) \in \barriercorridor(\state(t_2))}}\hspace{-2mm} s
   \quad &\forall \, 0 \leq t_1 \leq t_2 ,
 \\
 \lim_{t \rightarrow \infty} \state(t)  = \refpath(1). \quad &
\end{align}  
\end{proposition}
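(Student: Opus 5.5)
The plan is to reduce path following to a sequence of fixed-goal problems and apply \refprop{prop.geometry_meets_dynamics} and \refprop{prop.persistent_safe_goal_control} along the trajectory. Define the progress variable
\begin{align*}
\sigma(t) := \max\clist{s \in [0,1] \,\Big|\, \refpath(s) \in \barriercorridor_{\mathrm{full}}(\state(t))},
\end{align*}
so that $\refpathgoal(\state(t)) = \refpath(\sigma(t))$. The maximum is attained whenever the feasible set is nonempty, because $\barriercorridor_{\mathrm{full}}(\state)$ is closed (an intersection of closed halfspaces) and $\refpath$ is continuous.

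\emph{Safety and persistent progress.} The central step is a frozen-goal argument. Fix a time $t_1$ and let $\goal = \refpath(\sigma(t_1))$. Along the closed-loop flow, the goal-control barrier function $h_{i,\goal}$ in \refeq{eq.goal_control_barrier_function} has derivative $\kappa^2\tr{(\state-\goal)}\nabla^2 h_i(\state)(\state-\goal) \geq 0$ under $\dot\state = -\kappa(\state - \goal)$ when $\barrierrate=\ctrlgain$ and the $h_i$ are convex. So on any interval where the selected goal is constant, $\goal$ stays in $\barriercorridor_{\mathrm{full}}(\state(t))$, and \refprop{prop.geometry_meets_dynamics} gives $h_i(\state(t)) \geq h_i(\goal) \geq 0$.

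The selected goal changes only by jumping forward along the path, because the old goal remains feasible and therefore $\sigma$ cannot decrease. To make this rigorous despite the discontinuity of $\refpathgoal$, I would treat the closed loop in the Filippov or sample-and-hold sense, or argue via a time sequence with the goal piecewise constant. On each piece the computation above shows that the current goal $\refpath(\sigma(t_1))$ remains in the corridor at all later times $t_2$. Hence $\sigma(t_2)\geq \sigma(t_1)$, which gives the monotonicity claim, and nonemptiness propagates from $t=0$. Safety holds on every piece by \refprop{prop.geometry_meets_dynamics}, because the state is always a convex combination of $\state(t_1)$ and $\goal$, both of which lie in the convex safe corridor $\barriercorridor_{\mathrm{full}}(\state(t_1))$.

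\emph{Convergence.} This is the main obstacle. Since $\sigma$ is nondecreasing and bounded, it converges to some $\sigma^\infty \leq 1$, and I would argue by contradiction that $\sigma^\infty = 1$.
\begin{itemize}
\item If $\sigma^\infty<1$, the goal $\refpath(\sigma(t))$ settles near $\refpath(\sigma^\infty)$.
\item The proportional control then drives $\state(t)\to\refpath(\sigma^\infty)$, using exponential contraction toward a slowly varying goal.
\item Strict safety gives $h_i(\refpath(\sigma^\infty))>0$, so the constraints $h_i(\state)+\tr{\nabla h_i(\state)}(\goal-\state)\geq 0$ hold strictly at $\state=\goal=\refpath(\sigma^\infty)$.
\item By continuity of $h_i$, $\nabla h_i$ and $\refpath$, for $\state$ close enough to $\refpath(\sigma^\infty)$ the corridor contains $\refpath(s)$ for all $s$ in a neighborhood $[\sigma^\infty,\sigma^\infty+\delta]$.
\item This forces $\sigma(t) \geq \sigma^\infty+\delta$ eventually, which is a contradiction.
\end{itemize}
With $\sigma^\infty=1$ established, the goal eventually becomes $\refpath(1)$. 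Either this happens in finite time, or the goal converges to $\refpath(1)$; in both cases the same contraction estimate gives $\state(t)\to\refpath(1)$.

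The delicate points are the well-posedness of the discontinuous feedback and the contraction-with-moving-goal estimate. For the latter I would use the bound $\tfrac{\diff}{\diff t}\norm{\state-\refpath(\sigma^\infty)} \leq -\kappa\norm{\state-\refpath(\sigma^\infty)} + \kappa\norm{\refpath(\sigma(t))-\refpath(\sigma^\infty)}$, which follows directly from the dynamics.
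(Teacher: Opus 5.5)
Your proposal follows the paper's proof. Safety comes from the convex safe corridor of \refprop{prop.geometry_meets_dynamics}, monotone progress from the frozen-goal invariance of the goal-control barrier functions in \refprop{prop.persistent_safe_goal_control}, and convergence from strict safety of the path. Your continuity-plus-contraction argument makes precise the paper's brief remark that corridors around strictly safe path points have nonempty interior. The paper's ``previously selected goal is instantaneously maintained'' is no more rigorous than your frozen-goal step, so the solution-concept caveat you raise for the discontinuous argmax feedback is left open in both.

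One slip to fix: \refprop{prop.geometry_meets_dynamics} gives $h_i(\goal)\geq 0$, not $h_i(\state(t))\geq h_i(\goal)$, which is false, e.g., at $t=0$ whenever $h_i(\goal)>h_i(\state(0))$. Your later convex-combination argument already yields $h_i(\state(t))\geq 0$ correctly.
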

\begin{proof}
The safety of the robot along the trajectory $\state(t)$ follows from the definition in \refeq{eq.full_control_barrier_corridor}, which states that any goal $\goal \in \barriercorridor_{\mathrm{full}}(\state(t))$ ensures safety feasibility under the proportional goal control $\ctrl = -\ctrlgain(\state - \goal)$.
The existence of a safely reachable path goal at all times is a direct consequence of the safe and persistent goal property of the full control barrier corridor $\barriercorridor_{\mathrm{full}}(\state)$ in \refprop{prop.geometry_meets_dynamics} and \refprop{prop.persistent_safe_goal_control}.
The persistent goal property in \refprop{prop.persistent_safe_goal_control} also ensures persistent progress along the reference path, since the previously selected goal is instantaneously maintained within the control barrier corridor $\barriercorridor_{\mathrm{full}}(\state)$, while the farthest-path goal selection in \refeq{eq.path_goal_selection} may either keep it or replace it with a farther point if available.
Finally, since all path points are strictly safe (i.e., $h_i(\refpath(s)) > 0$), the control barrier corridor around any path point $\barriercorridor_{\mathrm{full}}(\refpath(s))$ has a nonempty interior with nonzero measure, ensuring that the robot can remain at a path point only for a finite time, except when asymptotically converging to the endpoint $\refpath(1)$ over time.
\end{proof}
Note that a similar conclusion also holds for the kinematic unicycle robot model under safe and persistent goal control mentioned in \refprop{prop.persistent_safe_unicycle_navigation} as long as the reference path is $\varepsilon$-safer with some extra safety margin $\varepsilon > 0$, as defined in \refeq{eq.safer_control_barrier_functions}, to allow smooth and continuous control near the goal.

\subsection{Sensor-Based Path Following for Mobile Exploration}

To demonstrate the applicability of control barrier corridors to sensor-based safe path following in mobile robot navigation, we consider frontier-based autonomous exploration of unknown environments \cite{yamauchi_CIRA1997}. 
We use a simulated mobile robot in the Gazebo simulator with unicycle dynamics and a 2D laser scanner for obstacle sensing, operating in a 2D environment with known position and onboard occupancy mapping capabilities \cite{elfes_C1989}. 
We construct the control barrier corridors using the Euclidean distance to sensed obstacle points (i.e., the order $p=1$ of the power distance) with a matching pair of linear velocity gain and barrier decay rate, $\barrierrate = \lingain$.  
The robot performs frontier-based exploration based on the following sequential perception, planning, and control cycle:
\begin{enumerate}[i)]
\item \label{item.frontier_selection} Goal Frontier Selection: Choose a reachable frontier grid farthest from occupied cells (a simple measure of information utility), located between known-free and unknown regions of the current occupancy grid map.
\item Safe Path Planning: Determine an optimal reference path (e.g., via graph search \cite{lavalle_PlanningAlgorithms2006}) using inverse distance to obstacles as local cost, minimizing path length while maximizing clearance.    
\item Safe Path Following: Select the farthest path point within the sensor-based control barrier corridor as a local navigation goal, and navigate with persistent unicycle goal control (\refprop{prop.persistent_safe_unicycle_navigation}) until the path endpoint is reached or no longer reachable; otherwise, go to \ref{item.frontier_selection}).
\end{enumerate}

In \reffig{fig.safe_exploration_control_barrier_corridor_unicycle}, we present examples of selected frontiers, reference paths, local path goals, and control barrier corridors during autonomous exploration of an unknown environment.
As observed in \reffig{fig.safe_exploration_control_barrier_corridor_unicycle}, convex control barrier corridors are adaptively constructed and shaped as local collision-free zones around the robot using sensed obstacles, and they continuously evolve with the continuous robot motion. 
Once the farthest path point is selected as a safe control goal for the unicycle, the unicycle control not only ensures the robot safety but also maintains the goal within the safe corridor for persistent progress along the exploration path (\refprop{prop.persistent_safe_unicycle_navigation}). 
Since safe and persistent path following only requires a safe reference path intersecting with the robot’s control barrier corridor (\refprop{prop.safe_persistent_path_following}), this enables replanning and updating the exploration path without coming to a full stop.  

\section{Conclusions}
\label{sec.conclusions}


In this paper, we present a generic framework for transforming control barrier functions of feedback control systems into control barrier corridors, converting functional safety requirements on control into geometric safety constraints on reference control goals. 
Example control barrier constructions for fully actuated systems under proportional control, kinematic unicycle systems under geometric inner-outer control, and linear systems under output regulation control show that individual state safety, encoded by control barrier functions, can be extended locally over control barrier corridors, enabling safe and persistent goal selection. 
As a result, safely reachable goal selection over control barrier corridors enables the design of verifiably safe, provably correct, and persistent path following in unknown environments using sensor-based control barrier corridors.

We are currently exploring the generalization of control barrier corridors to higher-order systems using higher-order control barrier functions. 
Another promising research question is how to systematically leverage geometric safe motion corridors and control convergence rates to learn  (neural) control barrier approximations with adaptive decay profiles.


%


%


\bibliographystyle{IEEEtran}
\bibliography{references.bib}


%
%
%

\end{document}